\def\B{{\mathcal{B}}}
\def\R{{\mathbb{R}}}
\def\Ee{{\mathcal{E}}}
\def\E{\mathop{\mathbb{E}}}
\def\A{{\mathcal{A}}}
\def\AG{{\mathcal{AG}}}
\def\U{{\mathcal{U}}}
\def\UG{{\mathcal{UG}}}
\def\Dd{{\mathcal{D}}}
\def\L{{\mathcal{L}}}
\def\CL{\hbox{\rm\tiny CL}}
\def\SL{\hbox{\rm\tiny SL}}
\newtheorem{theorem}{Theorem}[section]
\newtheorem{proposition}[theorem]{Proposition}
\newtheorem{lemma}[theorem]{Lemma}
\newtheorem{remark}[theorem]{Remark}
\icmltitlerunning{Efficient Availability Attacks against  Supervised and Contrastive Learning Simultaneously}
\begin{document}
\twocolumn[
\icmltitle{Efficient Availability Attacks against  Supervised and Contrastive Learning Simultaneously}

\begin{icmlauthorlist}
\icmlauthor{Yihan Wang}{yyy,xxx}
\icmlauthor{Yifan Zhu}{yyy,xxx}
\icmlauthor{Xiao-Shan Gao}{yyy,xxx}
\end{icmlauthorlist}

\icmlaffiliation{yyy}{Academy of Mathematics and Systems Science, Chinese Academy of Sciences, Beijing 100190, China}
\icmlaffiliation{xxx}{University of Chinese Academy of Sciences, Beijing 100049, China}
\icmlcorrespondingauthor{Xiao-Shan Gao}{xgao@mmrc.iss.ac.cn}

\icmlkeywords{Machine Learning, ICML}

\vskip 0.3in
]
\printAffiliationsAndNotice{}

\begin{abstract}
Availability attacks can prevent the unauthorized use of private data and commercial datasets by generating imperceptible noise and making unlearnable examples before release. 
Ideally, the obtained unlearnability prevents algorithms from training usable models. 
When supervised learning (SL) algorithms have failed, a malicious data collector possibly resorts to contrastive learning (CL) algorithms to bypass the protection.
Through evaluation, we have found that most of the existing methods are unable to achieve both supervised and contrastive unlearnability, which poses risks to data protection.
Different from recent methods based on contrastive error minimization, we employ contrastive-like data augmentations in supervised error minimization or maximization frameworks to obtain attacks effective for both SL and CL.
Our proposed AUE and AAP attacks achieve state-of-the-art worst-case unlearnability across SL and CL algorithms with less computation consumption, showcasing prospects in real-world applications.
\end{abstract}

\section{Introduction}
\label{sec:intro}
Availability attacks \citep{biggio2018wild} add imperceptible perturbations to the training data, making the subsequently trained model unavailable. 
The motivations behind these attacks involve protecting private data and commercial datasets from unauthorized use.
For example, a malicious data collector may gather selfies from social media apps into a facial image set. 
In this type of scenario, availability attacks provide tools to process user images before release, preserving legibility but impeding subsequent training. 
In recent years, various availability attacks have been proposed \citep{feng2019learning, huang2020unlearnable, fowl2021adversarial} against supervised learning (SL).

Meanwhile, contrastive learning (CL) allows people to extract meaningful features from unlabeled data in a self-supervised way. 
After subsequent linear probing or fine-tuning, CL algorithms have achieved comparable accuracy or even surpassed the performance of SL \citep{chen2020simple,chen2020improved,grill2020bootstrap, chen2021exploring}.
Thus, an unauthorized data collector can resort to contrastive learning algorithms to train a usable model when supervised learning has failed.
On one hand, most attacks designed for poisoning SL are ineffective against CL (refer to Table \ref{tab:contrastive-shortcut-align-uniform-gap} and Section \ref{subsec:existing-attacks-against-cl}).
It shed light on a potential issue of using availability attacks to protect data:
a malicious data collector can traverse both supervised and contrastive algorithms to effectively leverage collected data.
On the other hand, the error minimization poisoning framework has been extended to poison contrastive learning \citep{he2022indiscriminate}, and then components requiring label information have been incorporated into contrastive error minimization to simultaneously achieve supervised unlearnability besides contrastive unlearnability \cite{ren2022transferable, liu2023transferable}.
Compared to SL-based ones, these CL-based methods lack efficiency in poisoning generation, potentially hindering the use of availability attacks to protect extensive data in the real world (refer to Section \ref{subsec:efficiency}).


\begin{figure*}[t]
\centering
\includegraphics[width=1.4\columnwidth]{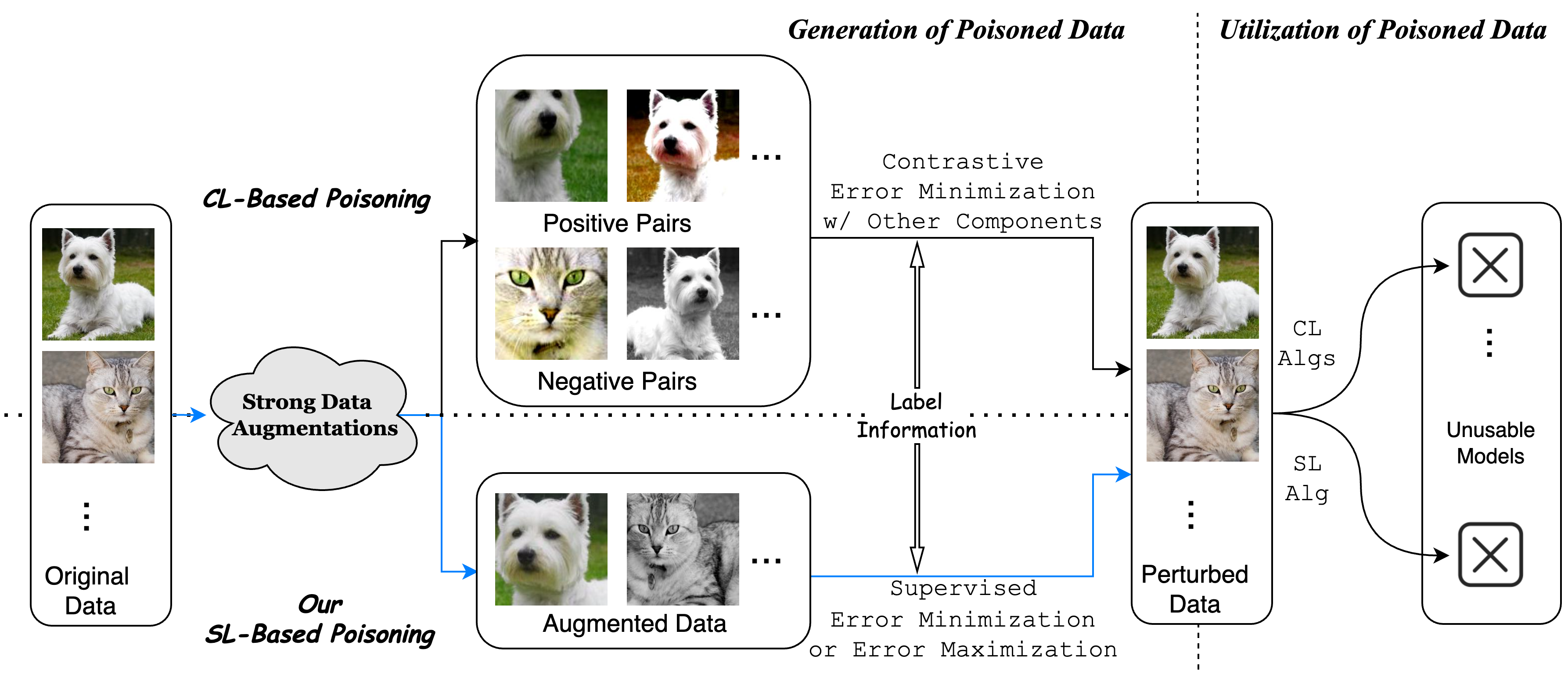}
    \caption{Illustration of our methods. The left-bottom flow (blue) is our supervised learning-based poisoning generation. The left-top flow is contrastive learning-based poisoning generation. The right flows are multiple supervised and contrastive learning evaluations on the poisoned data.}
    \label{fig:intro}
\end{figure*}

Our motivation for this paper comes from two aspects:
1)\textit{ A fully functional availability attack needs to be effective against subsequent supervised and contrastive learning algorithms simultaneously.}
2) \textit{Attacks based on supervised learning can be superior in efficiency compared to those based on contrastive learning.} 
To design non-CL-based availability attacks that possess both supervised and contrastive unlearnability simultaneously, we start from an interesting observation that supervised training with contrastive data augmentations can mimic contrastive training to some extent (refer to Section \ref{subsec:mimic-cl-with-supervised-models}). 
As shown in Figure \ref{fig:intro}, this technique of enhancing data augmentations can be easily embodied in two basic supervised attack frameworks, i.e. error-minimization, and error-maximization, resulting in our proposed AUE and AAP attacks (refer to Sections \ref{subsec:aue} and \ref{subsec:aap}).
Perturbations of our attacks are crafted on contrastive-like reference models and thus implicitly adapt to the contrastive training process and learn patterns that deceive contrastive learning. 
Besides, the supervised unlearnability is still preserved by the supervised error minimization or maximization framework.

We introduce the worst-case unlearnability in threat model (refer to Section \ref{subsec:threat-model}) to emphasize the demand for attacks to deal with a trickier unauthorized data collector.
In experiments, we evaluate across the standard supervised algorithm and four representative contrastive algorithms, SimCLR \citep{chen2020simple}, MoCo \citep{chen2020improved}, BYOL \citep{grill2020bootstrap} and SimSiam \citep{chen2021exploring}.
Our proposed AUE and AAP attacks achieve state-of-the-art worst-case unlearnability on CIFAR-10/100 and Tiny/Mini-ImageNet datasets (refer to Section \ref{subsec:worst-case-unlearnability}).
Furthermore, our methods exhibit excellent performance on the ImageNet-100, i.e. $7.5\%$ achieved by AUE, showcasing its prospects in real-world applications.
Meanwhile, unlike methods that add additional components to the contrastive error-minimization framework, we change the data augmentation in the simpler supervised attack frameworks, following a minimalist approach to algorithm design.  
Benefiting from this, our methods are more efficient, while delivering better performance.
For example, on CIFAR-10, our attacks are at least 3x, 6x, and 17x faster than three existing CL-based attacks (refer to Section \ref{subsec:efficiency}).
We summarize our contributions:
\vspace{-5pt}
\begin{itemize}
    \item We evaluated existing availability attacks and pointed out the potential security risks of using them to protect data when facing data abusers who will traverse both supervised and contrastive learning algorithms.
    \vspace{-5pt}
        \item 
        We start from supervised poisoning approaches and enhance data augmentations to attain attacks against both supervised and contrastive learning.
    \vspace{-5pt}
    \item 
    Our attacks achieve state-of-the-art worst-case unlearnability with less computation consumption and are more adept at handling high-resolution datasets.
\end{itemize}

\section{Background and Related Works}
\subsection{Contrastive Learning}
Contrastive learning is self-supervised and does not require label information until linear probing or fine-tuning.
In general, it first augments an input into two views using augmentations sampled from a strong augmentation distribution $\mu$.
Then extracted features are trained to be aligned between positive pairs but distinct between negative pairs.
\citet{wang2020understanding} introduced two key properties for contrastive learning, \textit{alignment} and \textit{uniformity}.
The former measures the similarity of features from positive pairs and the latter reflects the uniformity of feature distribution on the hypersphere. 
Let $g$ be a normalized feature extractor.
The \textit{alignment loss} and \textit{uniformity loss} on a dataset $\Dd_c$ are defined as the following:
\begin{align*}
\A(\Dd_c)&=\E_{\boldsymbol{x}\sim \Dd_c\atop \pi, \tau\sim\mu} \big[ ||g(\pi(\boldsymbol{x})) - g(\tau(\boldsymbol{x}))||_2^2 \big],\\
\U(\Dd_c)&=\log \E_{\boldsymbol{x}, \boldsymbol{z} \sim \Dd_c\atop\pi, \tau\sim\mu} 
    \big[e^{-2||g(\pi(\boldsymbol{x})) - g(\tau(\boldsymbol{z}))||_2^2} \big].
\end{align*}
\vspace{-3pt}
Let $\Dd_c'$ be a poisoned version of a clean dataset $\Dd_c$.
The \textit{alignment gap} and \textit{uniformity gap} between clean and poisoned datasets are defined as follows:
\begin{align}
\label{equ:gaps}
    \AG=\A(\Dd_c)-\A(\Dd_c'), \ \ \ 
    \UG=\U(\Dd_c)-\U(\Dd_c').
\end{align}
Intuitively, these gaps characterize the difference between clean features and poisoned features.
We will check the relationship between these gaps and contrastive unlearnability in Section \ref{subsec:existing-attacks-against-cl}.

\subsection{Basic Availability Attacks}
\label{subsec:basic-approaches}
The essence of availability attacks is to prevent a trained model from well generalizing to the clean data.
Error minimization and maximization are two representative approaches to poisoning supervised learning. 

\textbf{Error minimization.}
Unlearnable example attacks (UE, \citealt{huang2020unlearnable}) generate poisoning by alternately optimizing the reference model and perturbations:
\begin{align} \label{equ:supervised-error-minimizing}
    \min_{\delta} \min_f \E_{\Dd_c}\big[ \L_{\SL}(\boldsymbol{x}+{{\delta}}(\boldsymbol{x},y),y;f)\big],
\end{align}
where $f$ is a classifier, $\L_{\SL}(\cdot,\cdot;\cdot)$ is the supervised loss, $\Dd_c$ is a dataset to be processed and $\delta$ is a poisoning map.

\textbf{Error maximization.}
Adversarial poisoning attacks (AP, \citealt{fowl2021adversarial}) optimize perturbations through a pre-trained classifier to equip them with  non-robust but useful features from a different label:
\begin{align}\label{equ:adversarial-poisoning}
    &\min_{\delta} \E_{\Dd_c}\big[ \L_{\SL}(\boldsymbol{x}+{\delta}(\boldsymbol{x},y), y+K;f^*)\big]\textit{\ (Targeted)}\\
    \textit{or\ }& \max_{{\delta}} \E_{\Dd_c}\big[ \L_{\SL}(\boldsymbol{x}+{\delta}(\boldsymbol{x},y), y;f^*)\big]
        \textit{\ (Untargeted)}\nonumber\\
    &\text{s.t.\ \ \ \ } f^*\in \arg\min_f \E_{\Dd_c}\big[ \L_{\SL}(\boldsymbol{x},y;f)]\big].\nonumber
\end{align}
\citet{chen2023self} proposed self-ensemble protection (SEP-FA-VR) that generated adversarial poisons using several checkpoints to improve supervised unlearnability.

\textbf{Contrastive error minimization.}
To poison CL, the error minimization framework has been extended to contrastive error minimization (CP, \citealt{he2022indiscriminate}):
\begin{align} \label{equ:contrastive-error-minimizing}
    \min_{{{\delta}}} \min_g \E_{\Dd_c}\big[ \L_{\CL}(\boldsymbol{x}+{{\delta}}(\boldsymbol{x},y);g)],
\end{align}
where $g$ is a feature extractor and $\L_{\CL}(\cdot;\cdot)$ denotes the contrastive loss for simplicity.
Later, a regularization term called class-wise separability discriminant was introduced to equip noises with supervised unlearnability (TUE,  \citealt{ren2022transferable}).
Then, \citet{liu2023transferable} combined contrastive error minimization with supervised adversarial poisoning to create transferable poisoning (TP). 
It is worth mentioning that both TUE and TP leverage label information to obtain supervised unlearnability, while CP requires no label information but lacks effect on supervised learning.

\subsection{Related Works}
Availability attacks against supervised learning also include 
\citet{yuan2021neural, feng2019learning, sandoval2022autoregressive, wu2022one, sadasivan2023cuda, Liu2024game}.
\citet{yu2022availability} illustrated linearly separable perturbations work as shortcuts for supervised learning.
Mild supervised data augmentation has been applied in poisoning generation \citep{fowl2021adversarial, Fu2022RobustUE}. 
Specially designed data augmentations were introduced as defense methods \cite{liu2023image, qin2023learning}.
To the best of our knowledge, we are the first to employ contrastive-like strong data augmentations in the generation of SL-based availability attacks.
Refer to Appendix \ref{app-sec:additional-realated-works} for additional related works.
\section{Threat Model}
In our threat model, we assume that an unauthorized data collector assembles labeled data into a dataset. The access to label information is reasonable since the collector can crawl individual images from certain accounts or steal (and annotate) a commercial dataset. 
A data publisher is supposed to process data before release using an availability attack such that processed data is resilient to subsequent supervised learning algorithms as well as contrastive learning algorithms adopted by the data collector. 
\subsection{Worst-Case Unlearnability}
\label{subsec:threat-model}
Suppose a dataset $\Dd_c$ to be processed is \textit{i.i.d} sampled from a data distribution $\Dd$.
An availability attack $\delta$ maps a data-label pair $(\boldsymbol{x},y)\in \Dd_c$ to a noise $\delta(\boldsymbol{x},y)$ within an $L_p$-norm ball $\B_p(\epsilon)$.
In this paper, we set $p=\infty$ and $\epsilon=8/255$.
It results in a protected dataset $\Dd_c'=\{(\boldsymbol{x}+\delta(\boldsymbol{x},y), y)|(\boldsymbol{x},y)
\in \Dd_c\}$ to which a data collector has only access.
For potential algorithms, we refer $f$ to a supervised model and $g$ to a contrastive feature extractor beyond which is a linear probing head $h$. 
The goal of the data publisher is to find a poisoning map $\delta$ that significantly degrades the generalization performance of both $f_\delta$ and $h_\delta\circ g_\delta$ which are trained on $\Dd_c'$.
In this paper, we consider the \textit{worst-case unlearnability across supervised and contrastive learning algorithms} of the following form:
\begin{align}
\label{equ:threat-model-worst-case}
\min_{{\delta}} \max&( \E_{\Dd}\big[ \textbf{1}(f_\delta(\boldsymbol{x})= y) \big], \E_{\Dd}\big[\textbf{1}(h_\delta\circ g_\delta(\boldsymbol{x})= y) \big] )\\
\text{s.t. \ \ \ \ }
&f_\delta \in \arg\min_f 
\E_{\Dd_c}\big[\L_{\SL}(\boldsymbol{x}+{\delta}(\boldsymbol{x},y),y;f)\big], 
\nonumber\\
&g_\delta \in \arg\min_g \E_{\Dd_c}\big[\L_{\CL}(\boldsymbol{x}+{\delta}(\boldsymbol{x},y);g)\big], 
\nonumber\\
&h_\delta \in \arg\min_h \E_{\Dd_c}\big[\L_{\SL}(\boldsymbol{x}+{\delta}(\boldsymbol{x},y),y;h\circ g_\delta)\big].
\nonumber
\end{align}
It is a fair metric that accurately depicts scenarios facing more cunning data abusers in reality. 
In contrast, other metrics, such as average-case unlearnability, can be heavily influenced by the attack's strong preference for a certain algorithm.
Our threat model differs from the setting adopted by \citet{he2022indiscriminate} in which the linear probing stage relies on the unprocessed clean data as downstream tasks; see more discussion in Appendix \ref{app-subsect: discussion-of-clean-linear-probing}.

\subsection{Existing Attacks against Contrastive Learning}
\label{subsec:existing-attacks-against-cl}
In Table \ref{tab:contrastive-shortcut-align-uniform-gap}, we evaluate the attack performance of existing poisoning approaches against the SimCLR algorithm on CIFAR-10 and ResNet-18.
To better understand contrastive unlearnability, we also check alignment and uniformity gaps between clean and poisoned data which are defined in Eq.\eqref{equ:gaps}.
For non-CL-based poisoning attacks, AP and SEP achieve high contrastive unlearnability while others fail to deceive the contrastive learning algorithm.
The alignment and uniformity gaps of AP and SEP attacks are prominently larger than those of others.
For CL-based attacks, CP, TUE, and TP are effective against contrastive learning and possess huge alignment and uniformity gaps.

\begin{table}[t]
\centering
\caption{
Alignment and uniformity gaps of poisoned SimCLR \citep{chen2020simple} models along with the test accuracy. 
Attacks are grouped according to whether they are based on contrastive error minimization.
\textbf{Bold} fonts emphasize prominent contrastive unlearnability values.
}
\label{tab:contrastive-shortcut-align-uniform-gap}
\resizebox{\columnwidth}{!}{
\begin{tabular}{l|ccc}
\toprule
           Attack   & $\AG$    & $\UG$     &  Accuracy(\%) \\
\midrule
DC~\citep{feng2019learning}         & 0.12    & 0.07    & 86.1            \\
UE~\citep{huang2020unlearnable}          & 0.05   & 0.03    & 89.0      \\
AR~\citep{sandoval2022autoregressive}           & 0.07   & 0.09    & 88.8       \\
NTGA~\citep{yuan2021neural}        & 0.12   & 0.12    & 86.9          \\
SN~\citep{yu2022availability}         & 0.08   & 0.00    & 90.6       \\
OPS~\citep{wu2022one}         &0.04  &0.01 &86.7    \\
GUE~\cite{Liu2024game} &0.07&0.03&88.8\\
REM~\citep{Fu2022RobustUE}         &0.12  &0.04 &88.6    \\
EntF~\cite{wen2023is}        & 0.01   & -0.04   & 87.5    \\
HYPO~\citep{tao2022can}        & 0.11   & 0.13   & 86.9  \\
T-AP~\cite{fowl2021adversarial}& \textbf{0.18}   & \textbf{0.44}    & \textbf{48.4}      \\
UT-AP~\citep{fowl2021adversarial}& \textbf{0.17}   & \textbf{0.77}    & \textbf{41.5}      \\
SEP-FA-VR~\cite{chen2023self}  &\textbf{0.24}   &\textbf{0.25}  & \textbf{37.3}       \\
\hdashline
CP~\cite{he2022indiscriminate}   & \textbf{0.55}   & \textbf{0.87}    & \textbf{38.7}        \\
TUE~\cite{ren2022transferable}         & \textbf{0.30}  & \textbf{0.76}    & \textbf{48.1}        \\
TP~\cite{liu2023transferable} &\textbf{0.52}&\textbf{0.82}&\textbf{31.4}\\
\bottomrule
\end{tabular}
}
\end{table}

The Pearson correlation coefficient (PCC) between the alignment gap and the SimCLR accuracy is $-0.82$, and the PCC between the uniformity gap and the SimCLR accuracy is $-0.88$. 
It reveals that contrastive unlearnability is highly related to huge alignment and uniformity gaps, which indicate a significant difference between clean feature distribution and poisoned feature distribution.
After poisoned contrastive training, the feature extractor is fixed and the linear probing stage trains a linear layer to classify poisoned features.
If the gaps are huge, even though the extracted features of poisoned data are highly linear separable, the learned separability can hardly be generalized to clean features due to the huge discrepancy between clean features and poisoned features.
Consequently, even if the accuracy of poisoned data is high, the accuracy of clean data is likely to be still low and the poisoning attack succeeds.
In contrast, small gaps likely imply that clean features are similar to poisoned features. 
Therefore, once the classifier performs correct classification on poisoned data, it can generalize to clean data and thus the attack fails.

\section{Method}
Contrastive error minimization (CP) directly optimizes the contrastive loss on poisoned data which relates to alignment and uniformity.
Based on this, TUE and TP involve CP with additional components to obtain supervised unlearnability.
Since optimizing contrastive loss is very time and memory-consuming, we start from supervised frameworks instead and try to implicitly optimize the contrastive loss during the poisoning generation.
The key point to achieve our goal is to enhance data augmentations.
In the rest of this section, we first illustrate how contrastive data augmentations help mimic contrastive learning with supervised models through empirical observations and intuition from a toy example.
Then we combine this very effective technique with supervised error minimization and maximization frameworks and propose \textit{augmented unlearnable examples (AUE) attacks} and \textit{augmented adversarial poisoning (AAP) attacks}.

\subsection{Mimic Contrastive Learning with Supervised Models}
\label{subsec:mimic-cl-with-supervised-models}
Contrastive learning employs strong data augmentations including \textit{resized crop, color jitter, horizontal flip, and grayscale} \cite{wu2018unsupervised,he2020momentum}, while supervised learning adopts mild data augmentations such as horizontal flip and crop.
In Appendix \ref{app-subsec:augmentations}, Code \ref{code} shows detailed implementations for these two different settings.
The contrastive error minimization optimizes the contrastive loss involving strong contrastive augmentations while the supervised error minimization or maximization optimizes the supervised loss involving mild supervised augmentations.
What if we use strong contrastive augmentations when computing supervised losses?
\vspace{-5pt}
\begin{figure}[!ht]
\centering
\includegraphics[width=0.32\textwidth]{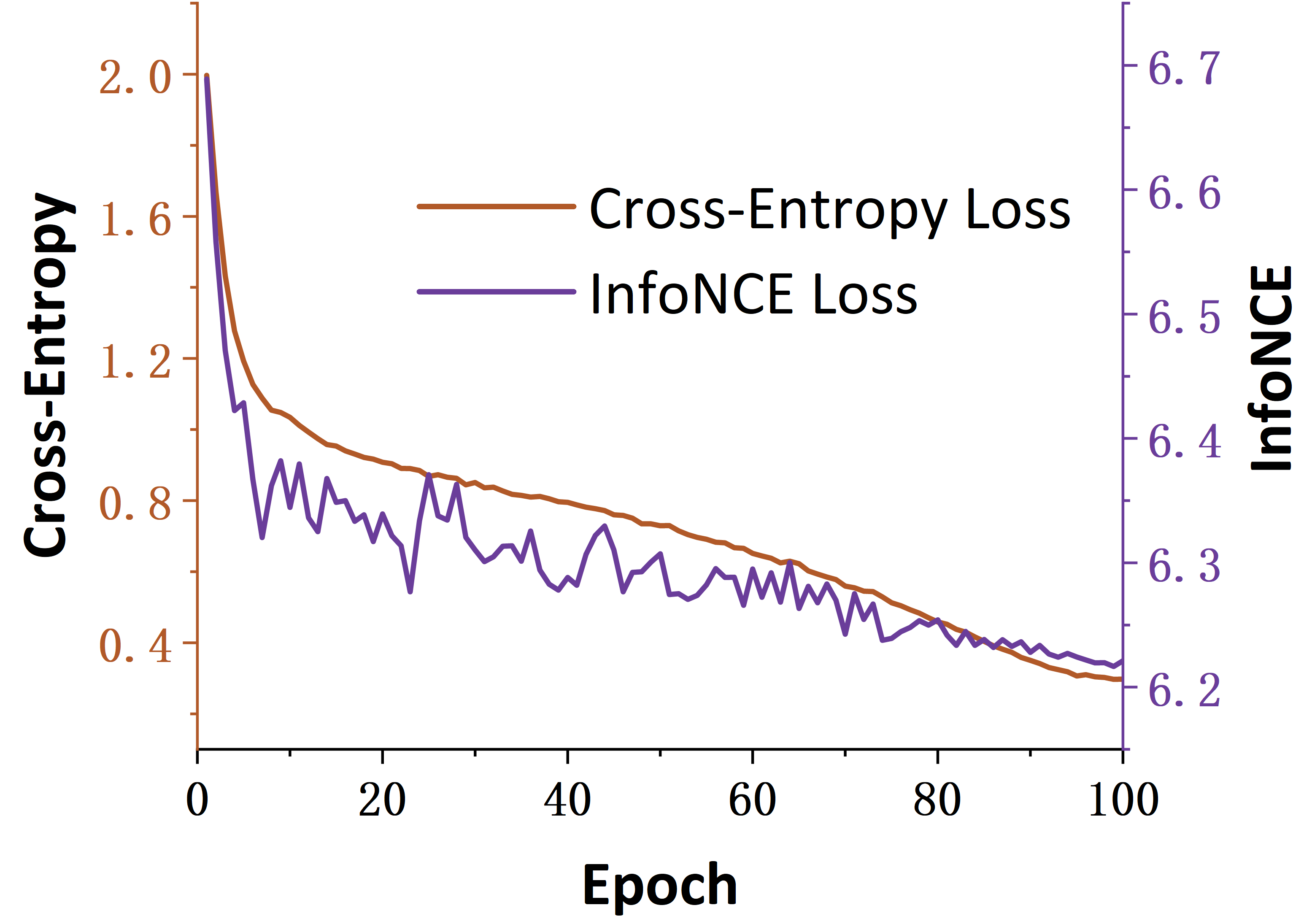}
    \caption{InfoNCE loss decreases with CE loss.}
    \label{fig:supervised-training-with-contrastive-augmentations}
\end{figure}
On CIFAR-10, we train a supervised ResNet-18 using contrastive augmentations.
For each checkpoint, the supervised CE loss and the contrastive InfoNCE loss \citep{oord2018representation} are computed on the training set.
In Figure \ref{fig:supervised-training-with-contrastive-augmentations}, when the optimization object CE loss goes down, the InfoNCE loss decreases as well.
It indicates that training a supervised model with contrastive augmentations implicitly optimizes the contrastive loss.
Therefore, incorporating stronger data augmentation can potentially enable availability attacks based on supervised error minimization or maximization to acquire the ability to deceive contrastive learning.

To provide more intuition about this idea, we give a toy example and have a closer look at the relationship between supervised loss and contrastive loss.
For a supervised model $f=h\circ g$, assume $g$ is a normalized feature extractor, $h$ is a square full-rank linear classifier, $\Dd$ is a balanced distribution, $\L_{\SL}$ is MSE loss and $\Ee_{\SL}=\E\big[\L_{\SL}\big]$, and  $\L_{\CL}$ contains only one negative example.
In this toy example, if $\L{\CL}$ and $\L{\SL}$ employ the same data augmentation and $f$ is well-trained, it holds with high probability that $\L_{\CL}<l(\Ee_{\SL})$, where $l(\cdot)$ is an increasing function.
In other words, the upper bound of contrastive loss decreases as the supervised loss decreases.
We have a more detailed and formal discussion on this toy example in Appendix \ref{app-sec:proofs}.


Based on these interesting observations, instead of adding components to contrastive error minimization to achieve supervised unlearnability, we opt for deriving stronger contrastive unlearnability from supervised error minimization and maximization.

\subsection{Augmented Unlearnable Examples (AUE)}
\label{subsec:aue}

\begin{algorithm*}[htb]
\caption{Augmented Unlearnable Examples (AUE)}
\label{alg:aue}
\begin{algorithmic}
\Require Augmentation strength $s$ and a corresponding augmentation distribution $\mu_s$.
    A labeled training set $\Dd_c = \{(\boldsymbol{x}_i,y_i)\}_{i=1}^{r}$. 
    An initialized classifier $f_\theta$.
    Total epochs $T$, model update iterations $T_\theta$, 
    poisons update iterations $T_\delta$, and perturbation steps $T_p$.
    Learning rate $\alpha_\theta, \alpha_\delta$.
\Ensure Perturbations $\{\boldsymbol{\delta}_i\}_{i=1}^{r}$

\State $\boldsymbol{\delta}_i\gets 0, i=1,2,\cdots,r$ \Comment{Initialize perturbations}

\For{$t=1,\cdots,T$}
    \For{$t_\theta = 1,\cdots, T_\theta$} \Comment{Update the reference model}
        \State Sample a data batch $\{(\boldsymbol{x}_{l_j}, y_{l_j})\}_{j=1}^m$ and an augmentation batch $\{\pi_{l_j}\sim\mu_s\}_{j=1}^m$
        \State $\theta \gets \theta - \frac{\alpha_\theta}{m} \cdot \sum_{j=1}^m \nabla_\theta \L_{\SL}(\pi_{l_j}(\boldsymbol{x}_{l_j}+
        \boldsymbol{\delta}_{l_j}), y_{l_j};f_\theta)$ 
    \EndFor
    \For{$t_\delta = 1,\cdots, T_\delta$} \Comment{Update perturbations}
        \State Sample a data batch $\{(\boldsymbol{x}_{l_j}, y_{l_j})\}_{j=1}^m$
        \For{$t_p = 1,\cdots, T_p$}
            \State Sample an augmentation batch $\{\pi_{l_j}\sim \mu_s\}_{j=1}^m$
            \State $\boldsymbol{\delta}_{l_j} \gets \text{Clip}_\epsilon 
            \big( \boldsymbol{\delta}_{l_j} - \alpha_{\delta} \cdot \text{sign}(\nabla_{\boldsymbol{\delta}_{l_j}} \L_{\SL}(\pi_{l_j}(\boldsymbol{x}_{l_j}+\boldsymbol{\delta}_{l_j}), y_{l_j};f_\theta)) \big), 
            j=1,2,\cdots,m$
        \EndFor
    \EndFor
\EndFor
\end{algorithmic}
\end{algorithm*}
Recall unlearnable examples are generated by supervised error minimization in which a reference model and noises alternately update in Eq.\eqref{equ:supervised-error-minimizing}.
Now we employ contrastive-like strong data augmentations $\pi\sim \mu$ and add perturbations in a differentiable way, i.e. $\pi(\boldsymbol{x}+{\delta}(\boldsymbol{x},y))$.
Then minimizing the augmented supervised loss $\L_{\SL}(\pi(\boldsymbol{x}+{\delta}(\boldsymbol{x},y),y;f)$ implicitly minimizes the contrastive loss $\L_{\CL}(\boldsymbol{x}+{\delta}(\boldsymbol{x},y);g)$ which appears in contrastive error minimization, i.e. Eq.\eqref{equ:contrastive-error-minimizing}.
In other words, supervised error-minimizing noises with enhanced data augmentations can partially replace the functionality of contrastive error-minimizing noises to deceive contrastive learning.

\begin{figure}[b]
     \centering
     \begin{subfigure}[b]{0.23\textwidth}
         \centering
         \includegraphics[width=\textwidth]{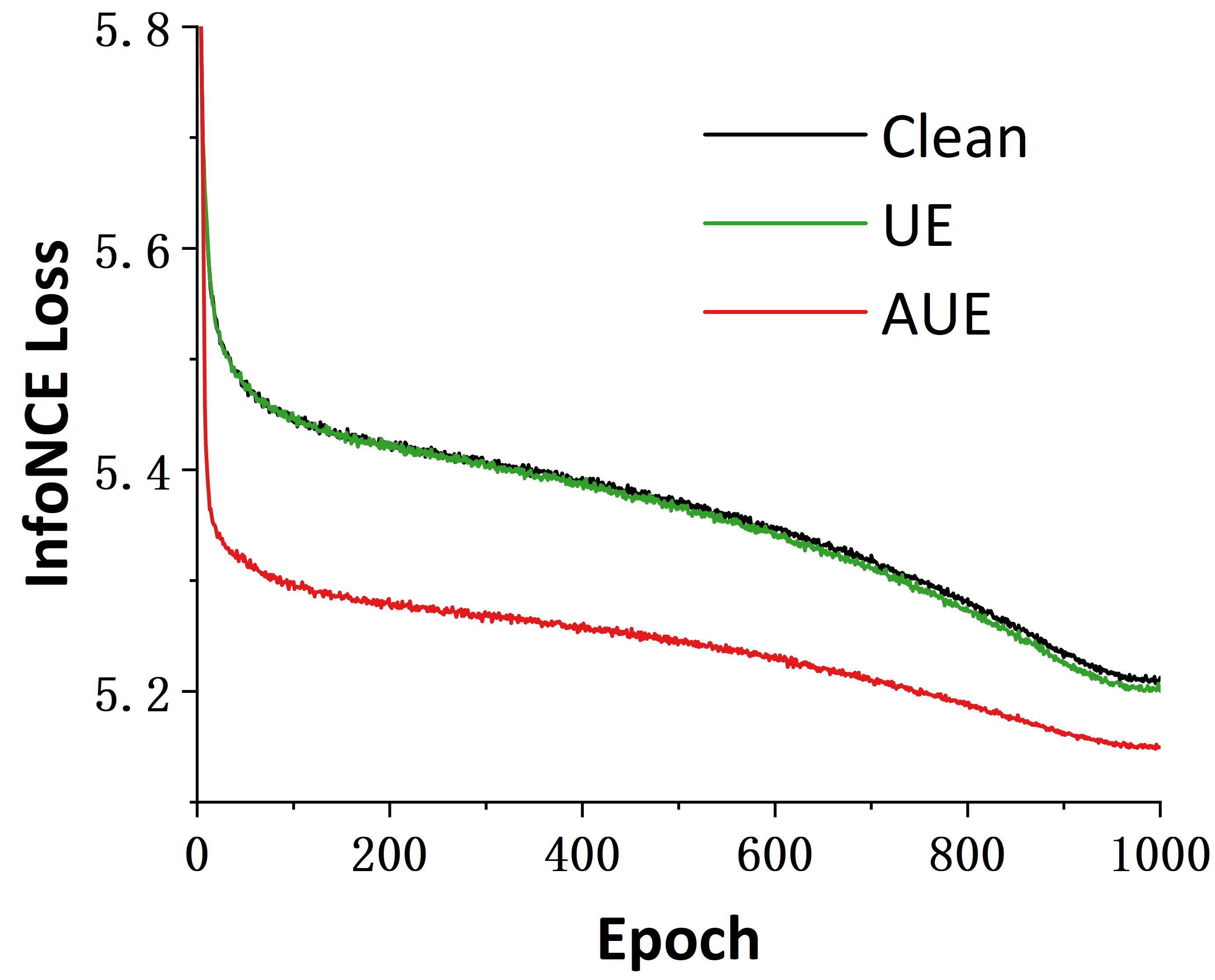}
         \caption{}
         \label{subfig:aue-infonce-loss}
     \end{subfigure}
     \hfill
     \begin{subfigure}[b]{0.23\textwidth}
         \centering
         \includegraphics[width=\textwidth]{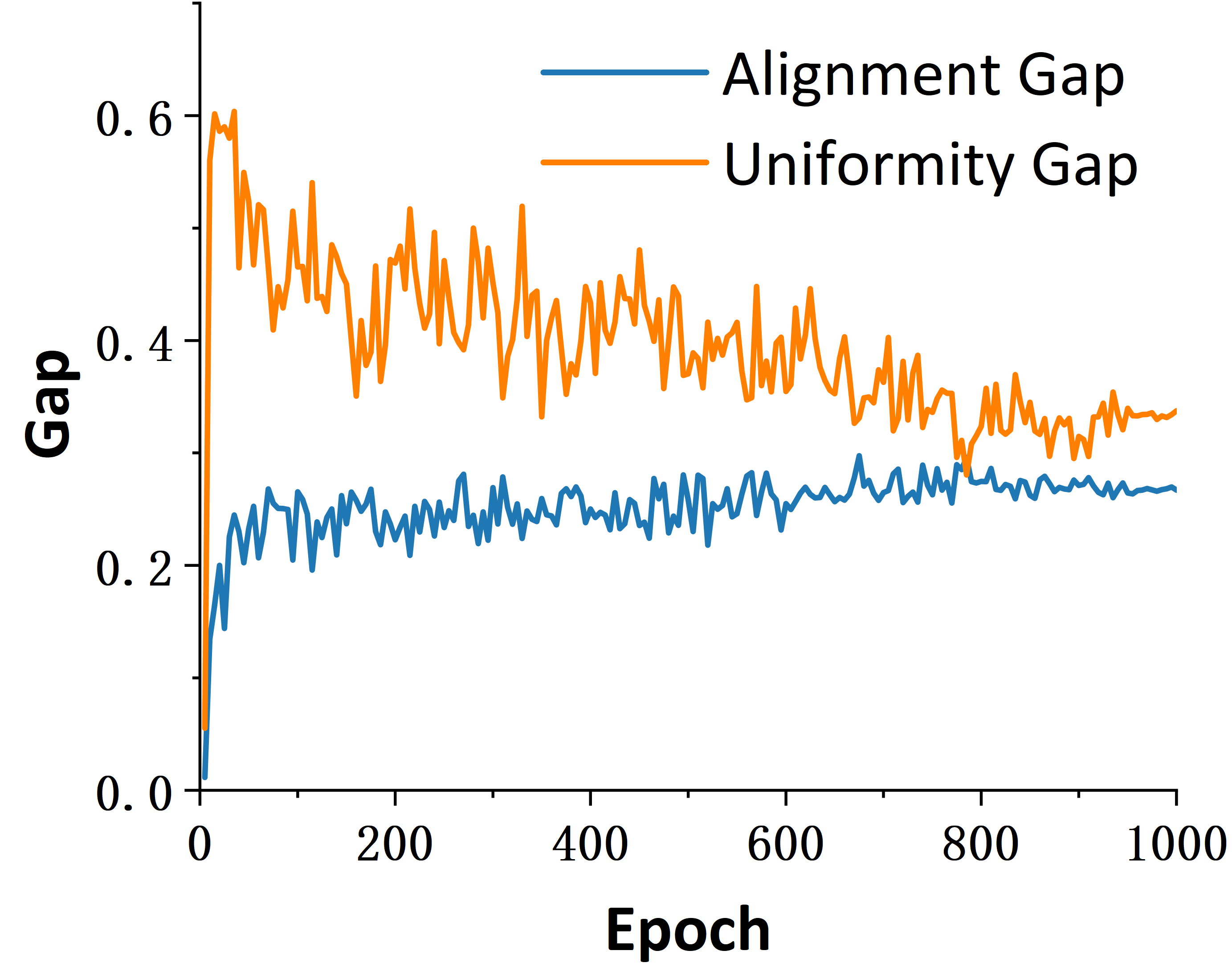}
         \caption{}
         \label{subfig:gaps-of-AUE}
     \end{subfigure}
    \caption{
    (a) Contrastive losses during SimCLR training under UE and our AUE attacks. 
    (b) Alignment and uniformity gaps during the SimCLR training on CIFAR-10 poisoned by our AUE attack.  }
\end{figure}

According to Code \ref{code} in Appendix \ref{app-subsec:augmentations}, we can control the intensity of contrastive augmentations via a strength hyperparameter $s\in [0,1]$. 
We increase the augmentation strength in the supervised error minimization according to Algorithm \ref{alg:aue} and check the performance of proposed AUE attacks against contrastive learning.
Details of strength parameter selection are shown in Appendices \ref{app-subsec:details-of-aue-aap} and \ref{app-subsec:strength-selection}.
In Table \ref{tab:simclr-accuracy-drop}, while UE attacks do not work for SimCLR on CIFAR-10 and CIFAR-100, our AUE attacks successfully reduce the SimCLR accuracy by $38.9\%$ and $50.3\%$.
Enhanced data augmentations indeed make supervised error-minimizing noises effective for contrastive learning. 
In Figure \ref{subfig:aue-infonce-loss}, AUE noises largely reduce the contrastive loss during SimCLR training compared to UE noises. 
In Figure \ref{subfig:gaps-of-AUE}, we investigate the alignment and uniformity gaps and discuss more about the poisoned training process in Section \ref{subsec:poisoned-training-process}.
The final gaps of AUE are $\AG=0.27, \UG=0.34$ while those of UE are $\AG=0.05, \UG=0.03$.


\begin{table}[ht]
\centering
\caption{The accuracy drop(\%) of SimCLR caused by basic attacks and our proposed methods.}
\label{tab:simclr-accuracy-drop}
\resizebox{\columnwidth}{!}{
\begin{tabular}{lc|cc|cc}
\toprule
          & Clean & UE   & AUE (ours)  & T-AP & T-AAP (ours) \\
\midrule
CIFAR-10  & 91.3  & -2.3 & -38.9 & -42.9
 & -52.2
  \\
CIFAR-100 & 63.9  & -3.9 & -50.3 & -38.3& -43.8
  \\
\bottomrule
\end{tabular}
}
\end{table}

\subsection{Augmented Adversarial Poisoning (AAP)}
\label{subsec:aap}
\begin{algorithm*}[t]
\caption{Augmented Adversarial Poisoning (AAP)}
\label{alg:aap}
\begin{algorithmic} 
\Require Similar to the setting in Algorithm \ref{alg:aue}.
\Ensure Perturbations $\{\boldsymbol{\delta}_i\}_{i=1}^{r}$
\State $\boldsymbol{\delta}_i\gets 0, i=1,2,\cdots,r$ \Comment{Initialize perturbations}
\For{$t=1,\cdots,T$} \Comment{Update the reference model}
    \For{$t_\theta = 1,\cdots, T_\theta$}
        \State Sample a data batch  $\{(\boldsymbol{x}_{l_j}, y_{l_j})\}_{j=1}^m$ and an augmentation batch $\{\pi_{l_j}\sim\mu_s\}_{j=1}^m$
        \State $\theta \gets \theta - \frac{\alpha_\theta}{m} \cdot \sum_{j=1}^m \nabla_\theta \L_{\SL}(\pi_{l_j}(\boldsymbol{x}_{l_j}), y_{l_j};f_\theta)$
    \EndFor
\EndFor
\For{$i=1,\cdots,r$} \Comment{Update adversarial examples}
    \For{$t_p = 1,\cdots, T_p$}
            \State Sample $\pi_i\sim \mu_s$
            \State $\boldsymbol{\delta}_i \gets \text{Clip}_\epsilon 
            \big( \boldsymbol{\delta}_i + \alpha_{\delta} \cdot \text{sign}(\nabla_{\boldsymbol{\delta}_i} \L_{\SL}(\pi_i(\boldsymbol{x}_i+\boldsymbol{\delta}_i), y_i;f_\theta)) \big)$\Comment{Untargeted}
            \State $\boldsymbol{\delta}_i \gets \text{Clip}_\epsilon 
            \big( \boldsymbol{\delta}_i - \alpha_{\delta} \cdot \text{sign}(\nabla_{\boldsymbol{\delta}_i} \L_{\SL}(\pi_i(\boldsymbol{x}_i+\boldsymbol{\delta}_i), y_i+1;f_\theta)) \big)$\Comment{Targeted}
    \EndFor
\EndFor
\end{algorithmic}
\end{algorithm*}

Adversarial poisoning (AP) attacks in Equ. \eqref{equ:adversarial-poisoning} first train a supervised reference model, then generate adversarial examples on it.
For targeted AP, while reference model training uses standard supervised loss,  the loss for noise generation translates class labels by $K$ such that generated poisons contain non-robust features that are related to the shifted labels.
When we generate adversarial poisoning with enhanced data augmentations $\pi \sim \mu$,
minimizing $\E\big[\L_{\SL}(\pi(\boldsymbol{x}), y; f)\big]$ with respect to $f$ mimics updating a reference model with contrastive training.
Then, minimizing $\L_{\SL}(\pi(\boldsymbol{x}+{\delta}(\boldsymbol{x},y)), y+K;f^*)$ with respect to ${\delta}$ updates poisons to deceive a contrastive-like reference model $f^*$.
For untargeted AP, stronger data augmentations play a similar role.
As a consequence, the resulting poisons can learn more about how to confound contrastive learning algorithms.

According to Algorithm \ref{alg:aap}, we increase the augmentation strength $s$ in both reference model pre-training and noise update where the label translation $K=1$.
Details of strength parameter selection are shown in Appendix \ref{app-subsec:details-of-aue-aap} and \ref{app-subsec:strength-selection}.
In Table \ref{tab:simclr-accuracy-drop}, our targeted AAP (T-AAP) attacks further enlarge the SimCLR accuracy drop of targeted AP by $9.3\%$ on CIFAR-10 and $5.5\%$ on CIFAR-100.
Enhanced data augmentations indeed improve the contrastive unlearnability of supervised error-maximizing noises. 

\section{Experiments}\label{sec:experiments}
\subsection{Setup}\label{subsec:settigns}
Poisons are generated on CIFAR-10/100, Tiny-ImageNet, modified Mini-ImageNet, and ImageNet-100. 
ResNet-18 \cite{he2016deep} is used for poison generation and evaluation if not otherwise stated.
Our threat model considers the worst-case unlearnability across supervised and contrastive algorithms. 
The standard supervised learning algorithm and four contrastive learning algorithms including 
SimCLR, MoCo, BYOL and SimSiam 
are employed to evaluate the attack performance of availability attacks.
We implement a linear probing stage on the poisoned data.

Due to Table \ref{tab:contrastive-shortcut-align-uniform-gap}, we adopt AP, SEP-FA-VR, CP, TUE, and TP as baselines for the worst-case unlearnability.
T-AP and T-AAP are targeted attacks and UT-AP and UT-AAP are untargeted. 
Since the generation of untargeted adversarial poisoning is unstable \citep{fowl2021adversarial}, we generate UT-AAP only on CIFAR-10.
Our AUE and AAP train reference models from scratch rather than using pre-trained weights.
For CP and TUE attacks, we specify algorithms they used for noise generation, for example, CP-SimCLR.
The generation of TP attacks is based on the SimCLR algorithm.
Detailed settings for evaluations and our proposed attacks are shown in Appendix \ref{app-sec:experiment-details}.

\begin{table}[htb]
\centering
\caption{Attack performance(\%) of availability attacks evaluated by supervised and contrastive learning algorithms on CIFAR-10. Attacks are grouped according to whether they are based on contrastive error minimization.}
\label{tab:worst-case-cifar10}
\resizebox{\columnwidth}{!}{
\begin{tabular}{l|c|cccc|c}
\toprule
Attack & SL & SimCLR & MoCo  & BYOL & SimSiam & Worst\\
\midrule
Clean          & 95.5 & 91.3  & 91.5   & 92.3 & 90.7   & 95.5 \\
\hdashline
T-AP    & 9.5  & 48.4   & 53.8    & 53.0 & 51.1    & 53.8  \\
UT-AP  & 9.6  & 41.5   & 31.5    & 44.0 & 42.8    & 44.0  \\
SEP-FA-VR &\textbf{2.3} &37.3  &35.8   &42.8 & 36.7    & 42.8  \\
\hdashline
CP-SimCLR     & 94.5 & 38.7   & 69.3    & 79.5 & \textbf{29.2}& 94.5  \\
CP-MoCo        & 94.5 & 53.7   & 47.9    & 56.8 & 47.1    & 94.5  \\
CP-BYOL       & 11.0 & 39.3   & 32.7    & 41.8 & 37.9    & 41.8  \\
TUE-SimCLR     & 10.6 & 48.1   & 71.2    & 79.5 & 39.0    & 79.5  \\
TUE-MoCo       & 10.1 & 57.2   & 51.6    & 60.1 & 58.5    & 60.1  \\
TUE-SimSiam    & 9.9  & 82.5   & 80.7    & 84.3 & 81.8    & 84.3  \\
TP &14.8&\textbf{31.4}&54.1&61.8&30.7&61.8 \\
\hdashline
AUE (ours)     & 18.9 & 52.4   & 57.0    & 58.2 & 34.5    &58.6  \\
T-AAP (ours)   &9.2  & 39.1   & 40.4    &43.3 & 42.1    & 43.3  \\
UT-AAP (ours) & 29.7 & 32.3   &  \textbf{23.2}  &\textbf{35.5} & 34.1   & \textbf{35.5} \\
\bottomrule
\end{tabular}
}
\end{table}

\subsection{Worst-Case Unlearnability}
\label{subsec:worst-case-unlearnability}
{\textbf{Performance.}} 
We evaluate standard supervised learning and four contrastive learning algorithms on CIFAR-10 and CIFAR-100 in Tables \ref{tab:worst-case-cifar10} and \ref{tab:worst-case-cifar100}.
In terms of the worst-case unlearnability across these evaluation algorithms, our proposed attacks achieve state-of-the-art attack performance compared to existing baselines. 
In CL-based methods, the CP methods have negligible unlearnability for supervised learning on the CIFAR-100 dataset, whereas the TUE-MoCo achieves performance close to the SOTA attack AUE. 
Thus, we then compare the attack performance of AUE and AAP with AP and TUE-MoCo baselines on high-resolution datasets in Table \ref{tab:imagenet}.
On Tiny-ImageNet and Mini-ImageNet, our methods improve by $13.4\%$ and $38.7\%$ over the baseline methods.
On ImageNet-100 whose images are 224x224, our AUE attacks reduce the worst-case unlearnability to a surprising level, $7.5\%$.
In real-world applications, our method is more promising because CL-based methods involve optimizing the contrastive loss, which is a significant challenge on high-resolution images such as ImageNet, whereas the supervised loss with which our SL-based methods deal is much easier.

Additionally, in a comparison between our two attacks, AAP performs better on datasets with fewer classes and lower resolution, while AUE is more effective on datasets with more classes and higher resolution.

\textbf{Algorithm transferability.}
CL-based methods face the issue of transferability from the generation CL algorithm and the evaluation CL algorithm. 
For instance, CP, TUE, and TP attacks generated using SimCLR are very effective against SimCLR itself, but their effectiveness significantly decreases when using BYOL and MoCo for training.

In contrast, our SL-based attacks get rid of this issue because their poisoning generation involves no CL algorithms. 
The contrastive unlearnability of AUE and AAP attacks is more stable across different CL algorithms.

\begin{table}[t]
\centering
\caption{Attack performance(\%) on CIFAR-100.}
\label{tab:worst-case-cifar100}
\resizebox{\columnwidth}{!}{
\begin{tabular}{l|c|cccc|c}
\toprule
Attack & SL & SimCLR & MoCo  & BYOL & SimSiam & Worst\\
\midrule
Clean          & 77.4 & 63.9   & 67.9    & 63.7 & 64.4    & 77.4  \\
\hdashline
T-AP    & 3.2  & 25.6   & 26.6    & 26.1 & 28.8    & 28.8  \\
UT-AP  & 42.7 & 11.1   & \textbf{9.8}   & \textbf{10.1} & 14.0    & 42.7  \\
SEP-FA-VR  & 2.4 & 25.2   & 25.9   & 26.6 & 28.4   &28.4  \\
\hdashline
CP-SimCLR     & 74.7 & 10.5   & 30.7    & 22.6 & 7.7     & 74.7  \\
CP-MoCo       & 74.4 & 15.2   & 13.4    & 16.4 & 14.1    & 74.4  \\
CP-BYOL       & 74.7 & 29.7   & 35.5    & 35.7 & 29.5    & 74.7  \\
TUE-SimCLR     & \textbf{1.0}  & 16.9   & 36.7    & 40.6 & 7.8     & 40.6  \\
TUE-MoCo      & 1.0  & 19.9   & 19.6    & 22.3 & 18.6    & 22.3  \\
TUE-SimSiam    & 1.1  & 33.9   & 31.0    & 40.9 & 10.3    & 40.9  \\
TP &7.5&\textbf{6.7}&21.9&27.0&\textbf{4.1}&27.0 \\
\hdashline
AUE (ours)     &6.9     & 13.6 &  19.0  & 19.2      & 11.9    & \textbf{19.2}  \\
T-AAP (ours)   & 7.3  & 20.1   & 18.6   &  21.1 & 21.3    & 21.3  \\
\bottomrule
\end{tabular}
}
\end{table}

\begin{table}[t]
\centering
\caption{Attack performance(\%) on higher-resolution datasets: Tiny-ImageNet (T-I), Mini-ImageNet (M-I), and ImageNet-100 (I-100).}
\label{tab:imagenet}
\resizebox{\columnwidth}{!}{
\begin{tabular}{cl|c|cccc}
\toprule
Dataset & Attack & SL & SimCLR & MoCo & BYOL & SimSiam \\
\midrule
\multirow{5}{*}{T-I}&Clean    & 53.5 & 39.6   & 43.3    &33.9  & 42.4   \\
&T-AP     & 11.3    &32.8 & 34.7  & 27.2    &34.5  \\
&TUE-MoCo   & \textbf{5.5}   &20.9 & 24.9  & 20.3    &25.0\\
&AUE (ours)     & 7.1   &\textbf{10.8}  & \textbf{11.7}  & \textbf{9.6 }    &\textbf{11.6}  \\
&T-AAP (ours)   & 18.7 & 28.4 & 27.6  & 25.2 & 28.2   \\
\midrule
\multirow{5}{*}{M-I}&Clean    &66.2  &55.3    &57.6 &48.7  & 54.5  \\
&T-AP     & 11.5    &48.9 & 50.1  & 44.0    & 48.5\\
&TUE-MoCo   & 9.8   &46.2 &48.4   & 43.1    & 46.9\\
&AUE (ours)     &\textbf{8.7 }    & \textbf{15.0} & \textbf{20.4}  & \textbf{14.5}     & \textbf{18.2}  \\
&T-AAP (ours)   &29.8   &43.8   & 41.9  & 40.2 & 41.8   \\
\midrule
\multirow{3}{*}{I-100}& Clean  & 77.8 & 61.8   & 61.8 & 62.2 & 65.8 \\
 &AUE (ours)    & \textbf{5.1}  & \textbf{5.2}  & \textbf{6.2}& 
 \textbf{7.5} & 
 \textbf{4.7}  \\
 &T-AAP (ours)  & 14.4 & 20.3       & 14.5  & 24.8 & 16.6    \\
\bottomrule
\end{tabular}
}
\end{table}

\subsection{Efficiency of Poisoning Generation}
\label{subsec:efficiency}
In real-world scenarios, availability attacks need to generate perturbations for accumulating data as quickly as possible.
For expanding datasets, like continually updated social media user data, the poisoning used for data protection also needs to be updated periodically.
Since contrastive learning involves larger batches (i.e. at least 512) and a longer training process (i.e. 1000 epochs), these contrastive error minimization-based attacks require more time and memory consumption to generate perturbations.

In Table \ref{tab:efficiency}, we report the time cost of poisoning CIFAR-10/100 using a single NVIDIA 3090 GPU. 
Our supervised learning-based approaches are 3x, 6x, and 17x faster than TUE, CP, and TP.
Additionally, our methods admit smaller batches and simpler cross-entropy loss which require less memory, allowing for the generation of availability attacks on larger datasets with fewer devices.
Refer to Appendix \ref{app-subsec:computation-consumption} for more results about the efficiency of our methods.
\begin{table}[t]
\centering
\caption{Comparison in time consumption of poisoning algorithms for CIFAR-10/100 on the same device. 
Baseline methods adopt their default configurations.}
\label{tab:efficiency}
\resizebox{\columnwidth}{!}{
\begin{tabular}{c|ccc|cc}
\toprule
             & CP-SimCLR & TUE-MoCo  & TP & AUE & AAP \\
\midrule
Time cost & 48 hrs & 8.5 hrs & 16 hrs & 2.7 hrs & 2.2 hrs\\
\bottomrule
\end{tabular}
}
\end{table}

\subsection{More Evaluation Paradigms}
In addition to supervised learning and contrastive learning algorithms, we consider two more CL-like algorithms including supervised contrastive learning (SupCL, \citealt{khosla2020supervised}) and a semi-supervised learning algorithm FixMatch \cite{sohn2020fixmatch}. 
FixMatch uses WideResNet \cite{zagoruyko2016wide} and detailed settings are in Appendix \ref{app-subsec:evaluation-algorithms}.
Table \ref{tab:supcl-fixmatch} demonstrates that our attacks are still effective against SupCL and FixMatch. 
It indicates that our methods can handle more variants derived from supervised learning and contrastive learning algorithms.
\begin{table}[htb]
\centering
\caption{Attack performance (\%) of AUE and T-AAP against SupCL and FixMatch on CIFAR-10/100.}
\label{tab:supcl-fixmatch}
\resizebox{0.9\columnwidth}{!}{
\begin{tabular}{c|cc|cc}
\toprule
      & \multicolumn{2}{c|}{CIFAR-10} & \multicolumn{2}{c}{CIFAR-100} \\
      & SupCL       & FixMatch       & SupCL        & FixMatch       \\
\midrule
Clean &  94.6           & 95.7        &   72.5            &   77.0             \\ 
AUE   & 31.5         & 30.0           & \textbf{15.6}         & \textbf{12.0}           \\
T-AAP &\textbf{24.7}        & \textbf{18.7}           & 17.9        &   25.5     \\
\bottomrule
\end{tabular}
}
\end{table}

\subsection{Visualization}
We scale imperceptible perturbations from [-8/255, 8/255] to [0,1] and show their images in Figure \ref{fig:visualization-compare}. 
Enhanced data augmentations endow AUE with more complicated patterns than UE.
In terms of frequency, they are more high-frequency than UE.
Since contrastive augmentations include grayscale that squeezes low-frequency shortcuts \citep{liu2023image}, attacks against CL first need to come through them and thus prefer high-frequency patterns.
Moreover, we check the class-wise separability of perturbations using t-SNE visualization \cite{van2008visualizing} in Figure \ref{fig:visualization-compare}.
Perturbations from AUE and T-AAP are less separable than those from UE and T-AP and coincide with the characteristics of perturbations from contrastive error minimization \cite{he2022indiscriminate}.
Refer to Appendix \ref{app-subsec:visualization} for visualization of more attacks.

\begin{figure}[t]
    \centering
    \includegraphics[width=\columnwidth]{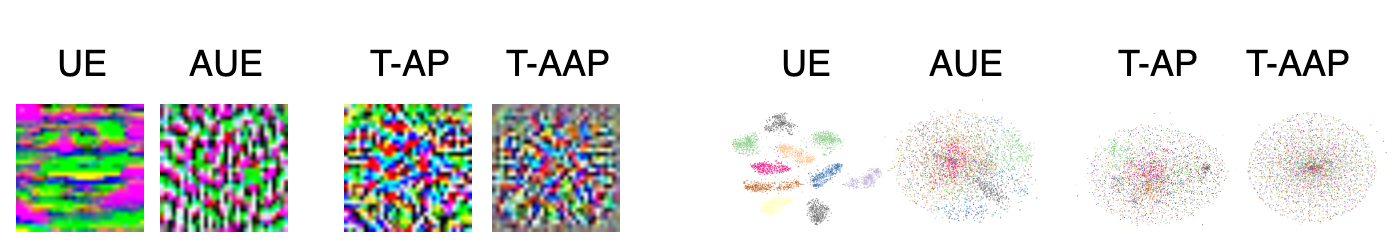}
    \caption{Visualization of poisoning on CIFAR-10. Left: Perturbation images. Right: T-SNE of perturbations.}
    \label{fig:visualization-compare}
\end{figure}

\subsection{Transferability across Networks}
Since the data protector is unaware of networks used in future training, availability attacks should be effective for different architectures.
We generate AUE and AAP using ResNet-18 and test them on ResNet-50, VGG-19 \citep{simonyan2015very}, DenseNet-121 \citep{huang2017densely}, and MobileNet v2 \citep{howard2017mobilenets, sandler2018mobilenetv2}.
In Table \ref{tab:network-archs}, both supervised unlearnability and contrastive unlearnability of AUE and AAP can transfer across these architectures. 
Moreover, the relative attack performance is preserved: T-AAP is consistently best for SL and UT-AAP is consistently best for CL.
\begin{table}[htb]
    \centering
    \caption{Transferability performance(\%) across architectures on CIFAR-10. 
    SimCLR is used for CL.}
\label{tab:network-archs}
\resizebox{0.7\columnwidth}{!}{
\begin{tabular}{ll|ccc}
\toprule
&Network            & AUE  & T-AAP & UT-AAP \\
\midrule
\multirow{4}{*}{SL}&ResNet-50    & 16.4 & 8.9   & 33.2   \\
&VGG-19       & 23.2 & 10.7  & 43.5   \\
&DenseNet-121 & 19.5 & 10.4  & 37.5   \\
&MobileNet v2   & 17.2 & 12.1  & 27.8   \\
\midrule
\multirow{4}{*}{CL}&ResNet-50    & 53.4 & 41.5  & 38.4   \\
&VGG-19       & 48.2 & 41.7  & 18.0   \\
&DenseNet-121 & 50.5 & 35.3  & 31.3   \\
&MobileNet v2   & 41.4 & 29.8  & 19.9   \\
\bottomrule
\end{tabular}
}
\end{table}
\vspace{-5pt}
\subsection{Training Process on Poisoned Data}\label{subsec:poisoned-training-process}
In Figure \ref{fig:early-stop}, we evaluate the training and test accuracy during SL and SimCLR training on poisoned data.
In very early epochs where the training underfits the poisoned data, checkpoints from both SL and SimCLR possibly process weak usability.
After a few epochs, the test accuracy rapidly goes down to an unusable level.
For SimCLR, the accuracy slowly increases in the middle and later stages of training.
It aligns with the overall trend of gradually decreasing uniformity gap and relatively stable alignment gap as shown in Figure \ref{subfig:gaps-of-AUE} for AUE.

\begin{figure}[htbp]
     \centering
         \includegraphics[width=0.23\textwidth]{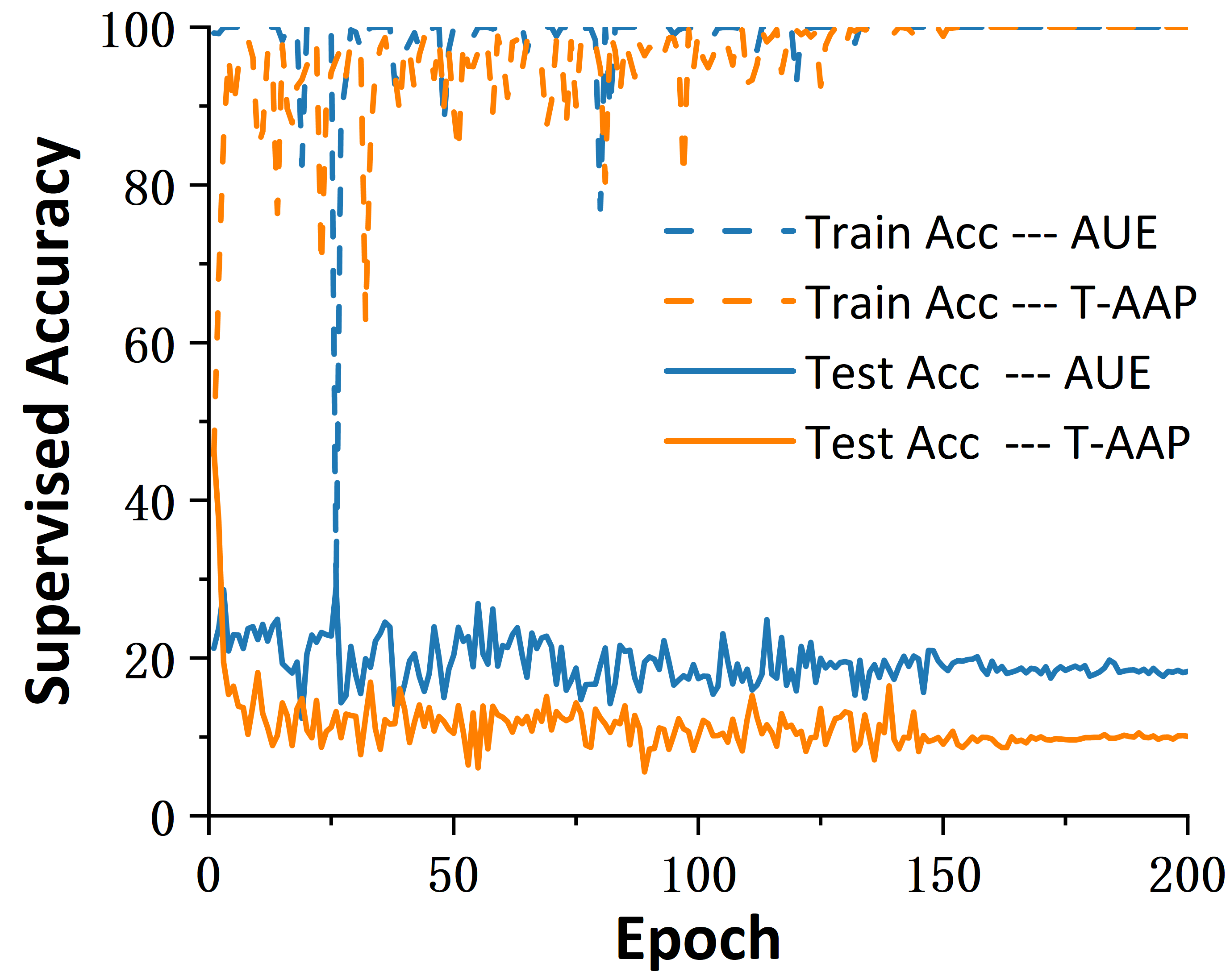}
         \includegraphics[width=0.23\textwidth]{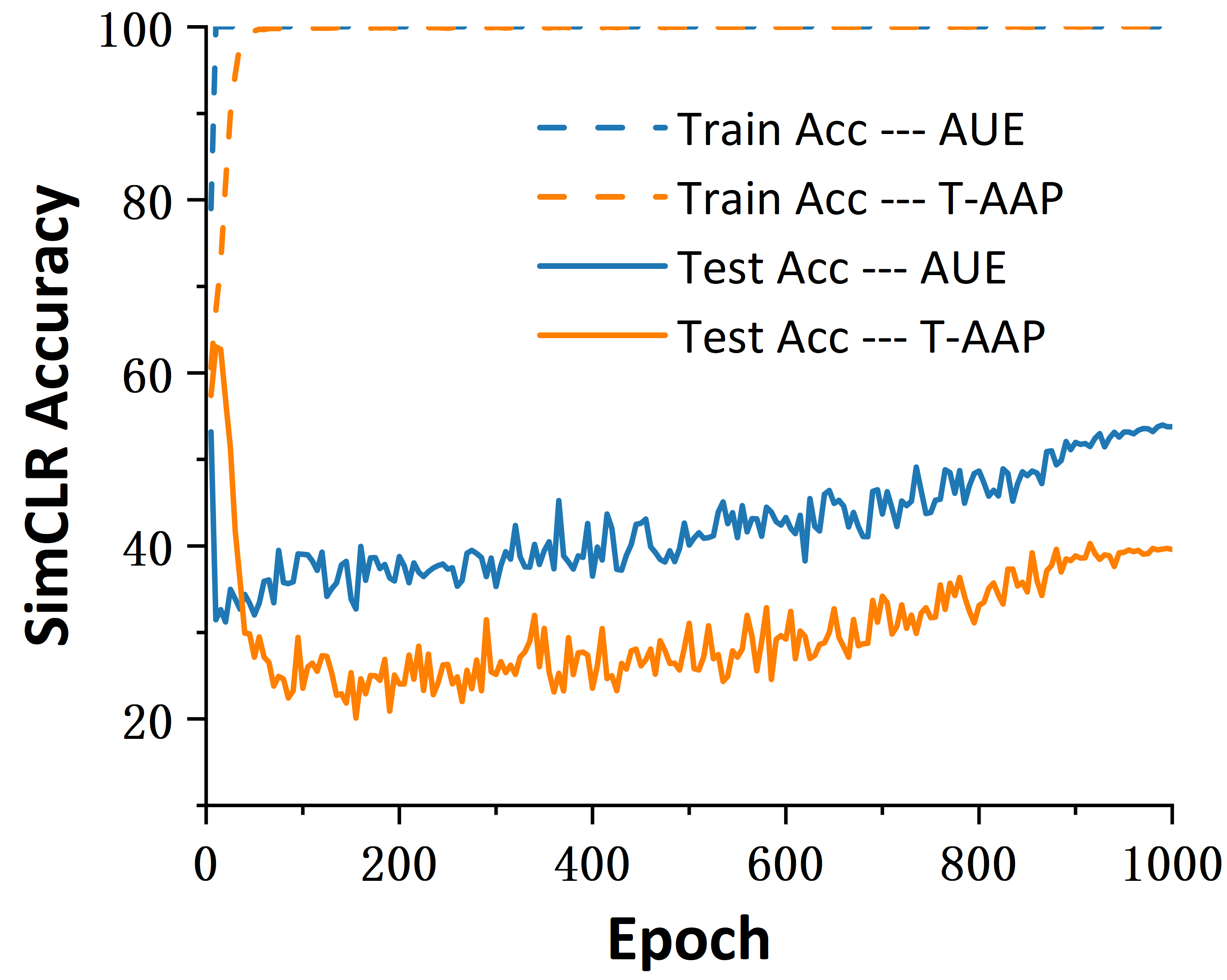}
    \caption{Training process on poisoned CIFAR-10. Left: Supervised learning. Right: SimCLR.
    }  
    \label{fig:early-stop}
\end{figure}

\subsection{Ablation Study of Decoupling  Augmentations}
In settings of AUE and AAP, we control the strength of ResizedCrop, ColorJitter, and Grayscale through a single strength hyperparameter $s$ for the poison generation, as shown in Code \ref{code}. 
In Table \ref{tab:decoupling}, we decouple the strength hyperparameters for these three random transforms and evaluate the resulting attacks against SimCLR.
Different factors show different influences on the contrastive unlearnability for AUE and AAP.
For example, enhancing ResizedCrop strength alone is less effective than enhancing Grayscale alone in AUE generation.
However, adjusting three factors together generally outperforms other options in conclusion.

\begin{table}[ht]
\centering
\caption{SimCLR accuracy(\%) of attacks generated with decoupled strength parameters on CIFAR-10. For example, $0$-$0$-$s$ means that ResizedCrop strength is 0, ColorJitter strength is 0, and Grayscale strength is $s$.}
\label{tab:decoupling}
\resizebox{\columnwidth}{!}{
\begin{tabular}{ccccccccc}
\toprule
      & $0$-$0$-$0$ & $0$-$0$-$s$ & $0$-$s$-$0$ & $s$-$0$-$0$ & $0$-$s$-$s$ & $s$-$0$-$s$ & $s$-$s$-$0$ & $s$-$s$-$s$ \\
\midrule
AUE   & 83.5  & 58.7  & 79.4  & 88.7  & 60.8  & 56.2  & 87.7  & \textbf{52.4}  \\
T-AAP & 52.3  & 52.0  & 52.9  & 44.9  & 51.4  & 42.2  & 44.8  & \textbf{39.1}  \\
\bottomrule
\end{tabular}
}
\end{table}

\section{Conclusion}
Since contrastive learning algorithms bring new challenges to protect data using availability attacks, we explore attacks that have promising worst-case unlearnability.
While recently proposed methods focus on boosting the supervised unlearnability of contrastive error minimization, we propose to improve the contrastive unlearnability of supervised error minimization and maximization.
We introduce a very effective modification of data augmentation in supervised poisoning frameworks.
Our methods demonstrate superior performance and efficiency compared to existing methods, offering more potential in real-world applications.


\onecolumn
\appendix

\section{Additional Related  Works}
\label{app-sec:additional-realated-works}
Availability attacks for supervised learning include 
error-minimizing noises \citep{huang2020unlearnable}, adversarial example poisoning \citep{fowl2021adversarial, chen2023self}, 
neural tangent generalization attack \citep{yuan2021neural}, generative poisoning attack \citep{feng2019learning}, autoregression perturbation \citep{sandoval2022autoregressive}, one-pixel perturbation \citep{wu2022one}, convolution-based attack \cite{sadasivan2023cuda}, synthetic perturbation \citep{yu2022availability},
and game-theoretic unlearnable examples \citep{Liu2024game}.  
\citet{yu2022availability} illustrated linearly separable perturbations work as shortcuts for supervised learning.
Robust error-minimizing noises \citep{Fu2022RobustUE}, entangled features strategy \citep{wen2023is}, and hypocritical perturbation \citep{tao2022can} were designed to deceive adversarial training.
Contrastive poisoning \citep{he2022indiscriminate} aimed at poisoning contrastive learning.
Transferable unlearnable examples \citep{ren2022transferable} and transferable poisoning \cite{liu2023transferable} improved the supervised unlearnability of contrastive poisoning.
\citet{zhang2023unlearnable} proposed to generate label-agnostic noises with cluster-wise perturbations.
On the defense side, adversarial training can largely mitigate the unlearnability \citep{tao2021better}. 
\citet{liu2023image, qin2023learning, zhu2023detection} leverages crafted data augmentations as defense.
\citet{sandoval2023can} suggests that the orthogonal projection technique is effective against class-wise attacks.
Diffusion models have been proposed to purify unlearnable perturbations \citep{jiang2023unlearnable, dolatabadi2023devil}.
\citet{qin2023apbench} introduced a benchmark for availability attacks.

When generating availability attacks, the gradient of perturbations is often computed through data augmentations. 
In literature, SL-based attacks generally use mild supervised data augmentation, i.e. RandomCrop and RandomHorizontalFlip \citep{fowl2021adversarial}.
The expectation over transformation (EOT) technique adopted by \citet{Fu2022RobustUE} first samples several such mild augmentations and then computes the average gradient over them.
CL-based attacks use contrastive augmentations \citep{he2022indiscriminate, ren2022transferable, liu2023transferable}.
To our knowledge, we are the first to use contrastive-like strong data augmentations in SL-based poisoning frameworks.

\section{Experiment Details}\label{app-sec:experiment-details}

\subsection{Datasets and Networks}\label{app-subsec:data}
\textbf{CIFAR.} CIFAR-10/CIFAR-100 \cite{krizhevsky2009learning} 
consists of 50000 training images and 10000 test images in 10/100 classes. 
All images are $32\times 32$ colored ones.

\textbf{Tiny-ImageNet.} Tiny-ImageNet classification challenge \citep{le2015tiny} is similar to the classification challenge in the full ImageNet ILSVRC \citep{russakovsky2015imagenet}. 
It contains 200 classes. The training has 500 images for each class and the test set has 100 images for each class. 
All images are $64\times 64$ colored ones.

\textbf{Mini-ImageNet.} Mini-ImageNet dataset was originally designed for few-shot learning \citep{vinyals2016matching}. We modify it for a classification task.
The modified dataset contains 100 classes. The training set has 500 images for each class. The test set has 100 images for each class.
All images are $84\times 84$ colored ones.

\textbf{ImageNet-100.} ImageNet-100 is a subset of ImageNet-1k Dataset from ImageNet Large Scale Visual Recognition Challenge 2012 \citep{russakovsky2015imagenet}. It contains 100 random classes. The training set has 130,000 images. The test set has 5,000 images. Images are processed to 224x224 colored ones as input data to models.

\textbf{ResNet.} 
On CIFAR-10/CIFAR-100, we set the kernel size of the first convolutional layer to 3 and removed the following max-pooling layer. On other datasets, we do not modify the models.

\subsection{Data Augmentation}\label{app-subsec:augmentations}
In Code \ref{code}, we show the different implementations of data augmentation between supervised learning and contrastive learning.
For supervised learning, we consider the typical augmentations including Crop and HorizontalFlip.
For contrastive learning, we consider the typical augmentations including ResizedCrop, HorizontalFlip, ColorJitter, and Grayscale, and its default strength $s=1$.
In the generation process of our AUE and AAP attacks, we replace the supervised augmentations with contrastive-like augmentations of a strength parameter $s$.
\newpage
\begin{lstlisting}[style=mypython,caption={Different data augmentations used in supervised learning and contrastive learning on CIFAR-10/100 datasets. The intensity of contrastive augmentations can be adjusted via strength $s$.}, label={code}]
# Supervised augmentations
Compose([RandomCrop(size=32, padding=4), RandomHorizontalFlip(p=0.5),
         ToTensor()])
# Contrastive augmentations
s = 1.0  # Strength is 1.0 by default for contrastive learning.
Compose([RandomResizedCrop(size=32, scale=(1-0.9*s, 1.0)),
         RandomHorizontalFlip(p=0.5),
         RandomApply([ColorJitter(brightness=0.4*s, contrast=0.4*s, 
                                saturation=0.4*s, hue=0.1*s)], p=0.8*s),
         RandomGrayscale(p=0.2*s), ToTensor()])
\end{lstlisting}

\subsection{Details of AUE and AAP}
\label{app-subsec:details-of-aue-aap}
We leverage differentiable augmentation modules in Konia\footnote{https://github.com/kornia/kornia} \citep{riba2020kornia} which is a differentiable computer vision library for PyTorch.
The contrastive augmentations for Tiny/Mini-ImageNet and ImageNet-100 are similar to those for CIFAR-10/100 in Code \ref{code} but only adapt the image size.

\textbf{AUE.} We train the reference model for $T=60$ epochs with SGD optimizer and cosine annealing learning rate scheduler. The batch size of training data is $128$.
The initial learning rate $\alpha_\theta$ is $0.1$, weight decay is $10^{-4}$ and momentum is $0.9$.
In each epoch, we update the model for $T_\theta = 391$ iterations and update poisons for $T_\delta = 391$ iterations. 
For ImageNet-100, we set  $T_\theta = T_\delta = 1016$.
The PGD process for noise generation takes $T_p=5$ steps with step size $\alpha_\delta = 0.8/255$.
The augmentation strength $s=0.6$ for CIFAR-10 and $s=1.0$ for CIFAR-100, Tiny-ImageNet, Mini-ImageNet, and ImageNet-100. 
Additional experiments of the selection of strength parameters are shown in Appendix \ref{app-subsec:strength-selection}.

\textbf{AAP.} We train the reference model for $T=40$ epochs, and the initial learning rate $\alpha_\theta$ is $0.5$. The PGD process for noise generation takes $T_p=250$ steps with step size $\alpha_\delta = 0.08/255$. Other settings are the same as AUE.
The label translation is $K=1$.
The augmentation strength $s=0.4$ for CIFAR-10 and $s=0.8$ for CIFAR-100, Tiny-ImageNet, Mini-ImageNet, and ImageNet-100.

\textbf{Sample-wise Attack.}
When a poisoning map $\delta(\boldsymbol{x},y)$ only depends on label $y$, the resulting attack is called a class-wise attack; otherwise, it is a sample-wise attack.
In this paper, we focus on sample-wise attacks.

\subsection{Evaluation Algorithms}
\label{app-subsec:evaluation-algorithms}
\textbf{Contrastive learning.}
The setup for SimCLR, MoCo, BYOL, and SimSiam are shown in Table \ref{tab:hyper-params}.
The 100-epoch linear probing stage uses an SGD optimizer and a scheduler that decays 0.2 at 60, 75, and 90 epochs.
The probing learning rate is 1.0 for SimCLR, MoCo, BYOL, and 5.0 for SimSiam on CIFAR-10/100, Tiny/Mini-ImageNet.
On ImageNet-100, the unsupervised contrastive learning optimizes 200 epochs and the linear probing uses a learning rate of 10.0. Other settings are the same as other datasets.
After generating our attacks on CIFAR-10/100, we report average test accuracy after 3 evaluations with random seeds.

\textbf{Supervised learning.} 
We augment the training data by RandomHorizontalFlip and RandomCrop with padding size $l/8$  on CIFAR-10/100 and Tiny/Mini-ImageNet. $l$ is the image size.
On ImageNet-100, we augment using RandomResizedCrop and RandomHorizontalFlip.

\textbf{SupCL and FixMatch.}
We use ResNet-18 for SupCL evaluation on CIFAR-10 and CIFAR-100.
For FixMatch evaluation, we use WideResNet-28-2 and 4000 labeled data on CIFAR-10; we use WideResNet-28-8 and 10000 labeled data on CIFAR-100.

\begin{table}[htb]
\centering
\caption{Details of supervised and contrastive evaluations.}
\label{tab:hyper-params}
\begin{tabular}{llllll}
\toprule[1pt] 
                 & SL     & SimCLR  & MoCo     & BYOL   & SimSiam \\ 
\midrule  
Batch size       & 512    & 512     & 512     & 512    & 512            \\
Epochs           & 200    & 1000    & 1000    & 1000   & 1000            \\
Loss function    & CE     & InfoNCE & InfoNCE & MSE    & Similarity             \\
Optimizer        & SGD    & SGD     & SGD     & SGD    & SGD            \\
Learning rate    & 0.5    & 0.5     & 0.3     & 1.0    & 0.1            \\
Weight decay     & 1e-4   & 1e-4    & 1e-4    & 1e-4   & 1e-4              \\
Momentum         & 0.9    & 0.9     & 0.9     & 0.9    & 0.9            \\
Scheduler        & Cosine & Cosine  & Cosine  & Cosine & Cosine    \\
Warmup           & 10     & 10      & 10      & 10     & 10              \\
Temperature      & -      & 0.5     & 0.2     & -      & -              \\
Encoder momentum & -      & -       & 0.99    & 0.999  & -              \\
\bottomrule[1pt]
\end{tabular}
\end{table}

\section{Additional Experiments} \label{app-sec:additional-experiments}

\subsection{Computation Consumption}
\label{app-subsec:computation-consumption}
We report the time consumption of generating AUE and AAP attacks.
For CIFAR-10/100, Tiny/Mini-ImageNet, experiments are conducted using a single NVIDIA GeForce RTX 3090 GPU.
For ImageNet-100, experiments are conducted using a single NVIDIA A800 GPU.
On CIFAR-10/100, AUE/AAP costs around 2.7/2.2 hours.
On Mini-ImageNet, AUE/AAP costs around 2.5/2 hours.
On Tiny-ImageNet, AUE/AAP costs around 2.5/3.8 hours.
On ImageNet-100, AUE/AAP costs around 12/10 hours.
In comparison, on CIFAR-10/100 and using the same device, CP-SimCLR costs around 48 hours, TUE-MoCo costs around 8.5 hours, and TP costs around 16 hours to generate poisons.
Our supervised poisoning attacks are much more efficient than contrastive poisoning attacks.

\subsection{Visualization}
\label{app-subsec:visualization}
In Figure \ref{fig:full-images-tsne}, we present images and the t-SNEs visualization of availability attacks on CIFAR-10. 
\begin{figure}[htbp]
     \centering
     \begin{subfigure}[b]{0.6\columnwidth}
         \centering
         \includegraphics[width=\textwidth]{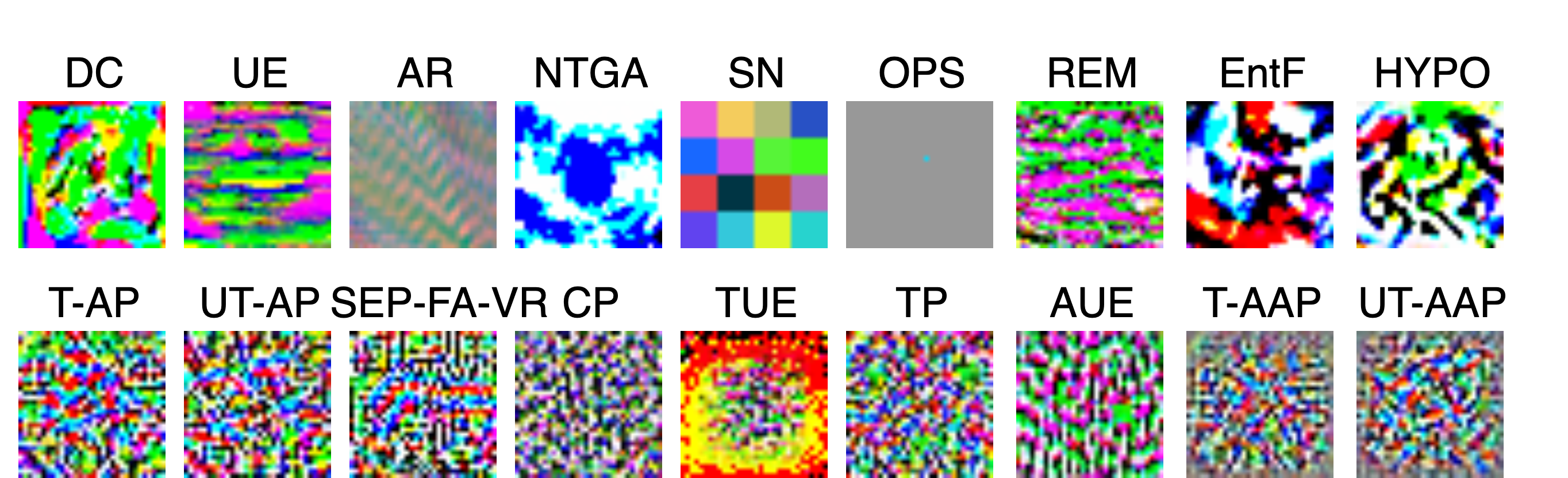}
         \caption{}
         \label{fig:poison-imgs}
     \end{subfigure}
     \begin{subfigure}[b]{0.6\columnwidth}
         \centering
         \includegraphics[width=\textwidth]{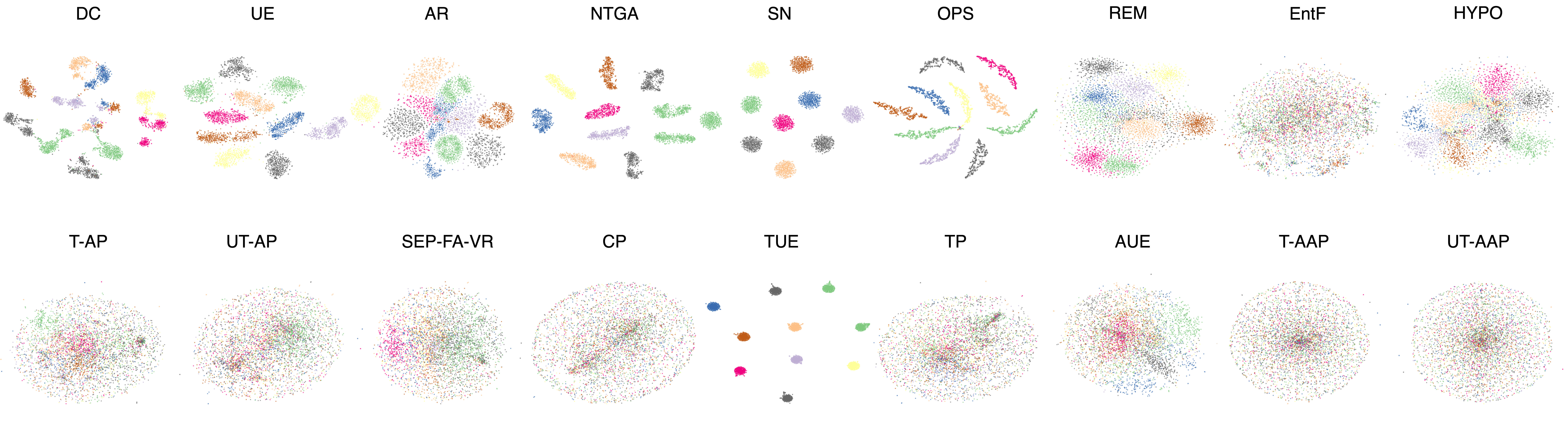}
         \caption{}
         \label{fig:tsne-pairs}
     \end{subfigure}
    \caption{(a) Perturbation images of availability attacks on CIFAR-10.
    (b) T-SNE visualization of perturbations.    
    In each figure, the top row includes attacks that are not effective against contrastive learning, and the bottom row includes attacks that have contrastive unlearnability.
    }  
    \label{fig:full-images-tsne}
\end{figure}

\subsection{Strength Selection}
\label{app-subsec:strength-selection}
\begin{figure}[htbp]
     \centering
     \begin{subfigure}[b]{0.25\textwidth}
         \centering
         \includegraphics[width=\textwidth]{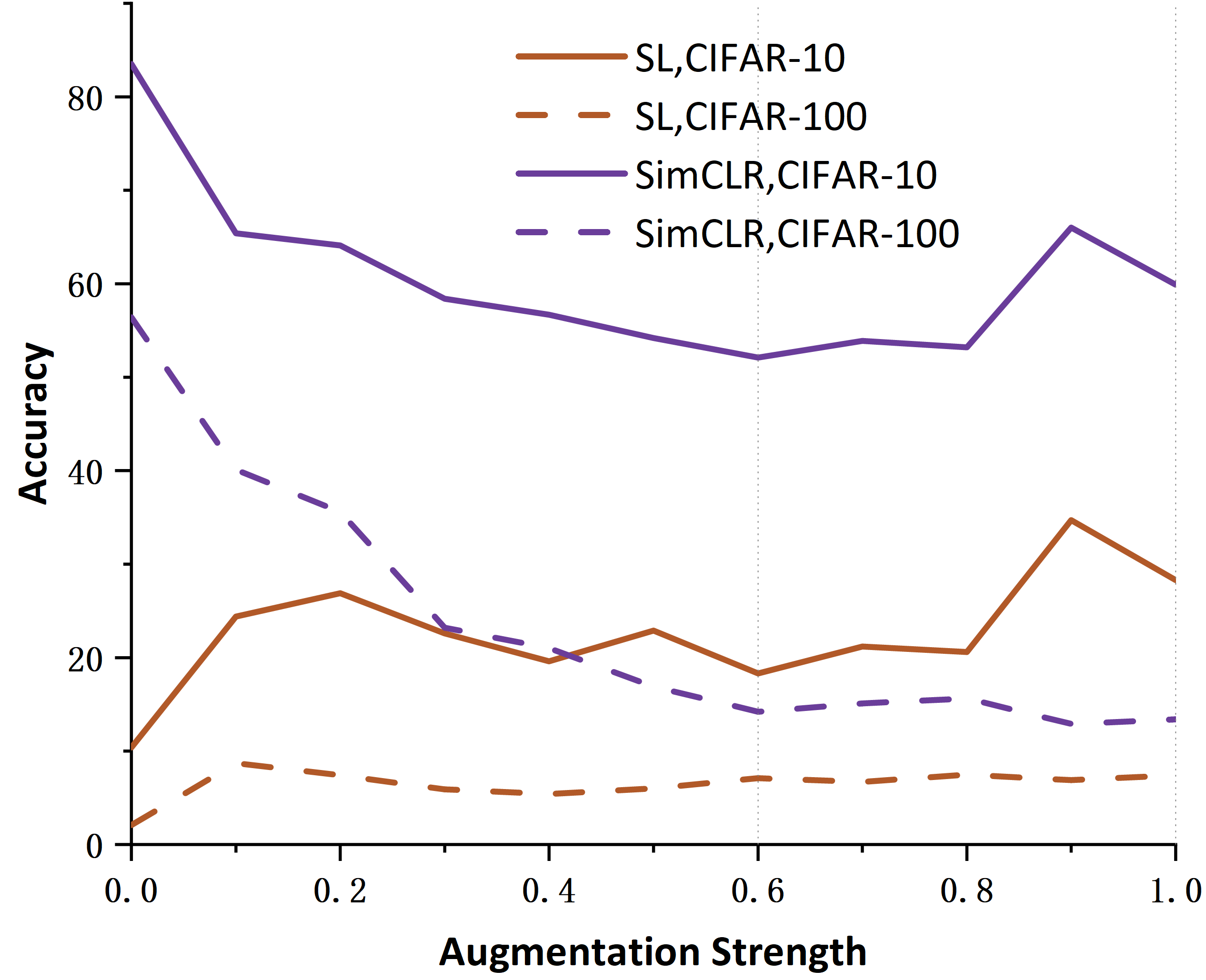}
         \caption{}
         \label{subfig:UE-with-enhanced-data-augmentations}
     \end{subfigure}
     \begin{subfigure}[b]{0.25\textwidth}
         \centering
         \includegraphics[width=\textwidth]{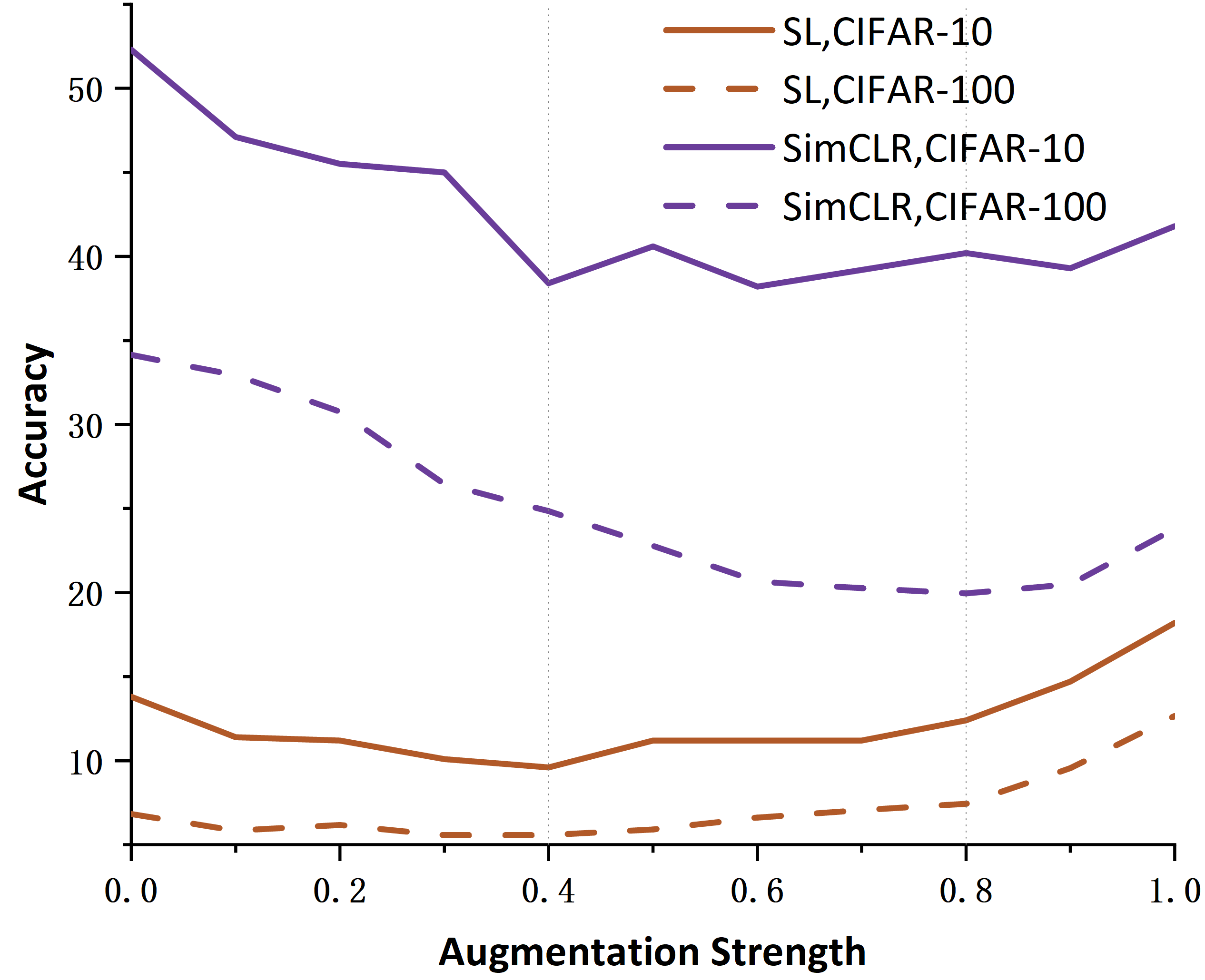}
         \caption{}
         \label{subfig:TAP-with-enhanced-data-augmentations}
     \end{subfigure}
    \caption{
    (a) Influence of augmentations in AUE.  
    (b) Influence of augmentations in T-AAP.
    }
\end{figure}
\textbf{AUE.}
We gradually increase the data augmentation strength $s$ in the supervised error minimization according to Algorithm \ref{alg:aue}.
In Figure \ref{subfig:UE-with-enhanced-data-augmentations}, the SimCLR accuracy prominently decreases as the strength grows, while the supervised learning accuracy slightly increases.
Compared to UE, our AUE attacks largely improve contrastive unlearnability while keeping similar supervised unlearnability.
On CIFAR-10, too strong strengths might compromise the unlearnability. 
Thus, we generate our augmented unlearnable example (AUE) attacks taking $s=0.6$ for CIFAR-10, and $s=1.0$ for CIFAR-100.

\textbf{AAP.}
We gradually increase the data augmentation strength $s$ in the supervised error maximization according to Algorithm \ref{alg:aap}.
In Figure \ref{subfig:TAP-with-enhanced-data-augmentations}, the SimCLR accuracy decreases with the strength, while the supervised learning accuracy slightly increases.
Proper augmentation strengths improve the contrastive unlearnability but too large $s$ might introduce difficulty in poison generation and harm the supervised unlearnability.
We select $s=0.4$ for CIFAR-10 and $s=0.8$ for CIFAR-100.

\subsection{Strength and Gaps}
On CIFAR-10, we gradually increase the augmentation strength from 0 to the default setting, i.e. $s=0.6$ in the generation of AUE attacks and evaluate the alignment gaps, uniformity gaps, and the SimCLR Accuracy in Table \ref{tab:strength-gaps-aue}.
In this case, the larger the gaps, the lower the accuracy of SimCLR. 

\begin{table}[h]
\centering
\caption{Alignment and uniformity gaps caused by AUE attacks with different data augmentation strengths on CIFAR-10.}
\label{tab:strength-gaps-aue}
\begin{tabular}{cccc}
\toprule
Strength & \multicolumn{1}{l}{Alignment Gap} & Uniformity Gap & \multicolumn{1}{l}{SimCLR Accuracy} \\
\midrule
$s=0.0$        & 0.14                              & 0.07           & 83.5                                \\
$s=0.2$      & 0.21                              & 0.24           & 64.1                                \\
$s=0.4$      & 0.25                              & 0.28           & 56.7                                \\
$s=0.6$      & 0.27                              & 0.34           & 52.4                                \\
\bottomrule
\end{tabular}
\end{table}

\subsection{Poisoning Budget} 
In the main body, we consider the poisoning attacks constrained in a $L_\infty$-norm ball with radius $8/255$. 
The constraint is to ensure perturbations are imperceptible to human eyes.
We investigate the influence of different poisoning budgets. 
AUE and AAP attacks are generated with poisoning budgets of $2/255, 4/255, 6/255$ and are evaluated by SL and SimCLR.
In Table \ref{tab:poisoning-budget}, the larger the poisoning budgets, the better the attack performance.
\begin{table}[htb]
    \centering
    \caption{Performance(\%) of attacks generated with different poisoning budgets on CIFAR-10.}
\label{tab:poisoning-budget}
    \begin{tabular}{ccccc}
\toprule
& Budget            & AUE  & T-AAP & UT-AAP \\
\midrule
\multirow{3}{*}{SL}&2/255     & 34.5 & 50.7  & 75.6   \\
&4/255     & 28.5 & 19.7  & 58.5   \\
&6/255     & 26.8 & 12.3  & 44.2   \\
\midrule
\multirow{3}{*}{SimCLR}&2/255 & 84.8 & 87.0  & 87.1   \\
&4/255 & 70.1 & 66.6  & 59.8   \\
&6/255 & 59.4 & 51.1  & 43.0   \\
\bottomrule
\end{tabular}
\end{table}

\subsection{Defense}
\label{subsec:defenses}
On the defense side against availability attacks, AT \citep{madry2018towards} and AdvCL \citep{kim2020adversarial}) applied adversarial training in supervised learning and contrastive learning respectively;
ISS \citep{liu2023image} and UEraser \citep{qin2023learning} leveraged designed data augmentations to eliminate supervised unlearnability; 
AVATAR \citep{dolatabadi2023devil} employed a diffusion model to purify poisoned data.
In Table \ref{tab:defenses}, we evaluate our attacks through these defense methods as well as SimCLR with Cutout \citep{devries2017improved}, Random noise, and Gaussian Blur.
The defensive budget for AT and AdvCL is $8/255$; the length parameter for Couout is 8; the kernel size for Gaussian Blur is 3; the variance for Random noise is 8/255.

The defense performance of a method differs when facing different attacks.
For example, UEraser can recover the accuracy of TUE-SimCLR from $10.6\%$ to above $92.7\%$, while its effect on our AUE attack is much weaker.
At the cost of a significant amount of extra training time, adversarial training, i.e. AT and AdvCL, can increase accuracy to around $80\%$.
ISS mitigates the supervised unlearnability of evaluated attacks back to levels close to $85\%$, but its Grayscale component may even have negative effects.
Gaussian Blur is more effective than Cutout and Random noise for contrastive learning.
AVATAR seems to achieve the best defense performance against our proposed attacks, but the final accuracy still exhibits a gap compared to training with clean data, such as achieving accuracy above $90\%$.

\begin{table}[htbp]
\centering
\caption{Performance(\%) under defenses on CIFAR-10.}
\label{tab:defenses}
\begin{tabular}{cl|ccc|cc}
\toprule
 & Defense              & AUE  & T-AAP & UT-AAP &T-AP&TUE-SimCLR\\
\midrule
\multirow{9}{*}{SL}&No Defense & 18.9  &  9.2 & 29.7 &9.5&10.6 \\
&UEraser         & 63.2   & 64.7  & 81.8  &68.0&92.7 \\
&\ \ \ -Lite    & 60.6     &66.8  &82.2 &70.7&92.2\\
&\ \ \ -Max     & 72.8     &79.5  &85.8   &80.2&93.2\\
&ISS             & 82.6     &82.3  &81.4   &81.7&82.7\\
&\ \ \ -Grayscale   & 18.2     &9.1  &23.8  &11.4&28.0 \\
&\ \ \ -JPEG        & 84.9     &84.3  &84.0  &84.6&82.1 \\
&AVATAR          & 85.0     &88.0  &86.6  &87.7&83.2 \\
&AT             & 83.8     & 81.6  & 79.6  &81.0&81.7 \\
\midrule
\multirow{6}{*}{SimCLR}&No Defense &52.4  &  39.1 &  32.3 & 48.4&48.1 \\
&Cutout         & 51.8 & 37.9  & 31.8 &49.2&49.6\\
&Random Noise    & 60.5 & 62.4  & 48.0 &66.4&70.0  \\
&Gaussian Blur& 69.1 & 76.7  & 78.9 &75.5&79.3  \\
&AVATAR          &83.1&80.8&79.9&81.1& 83.0\\
&AdvCL          & 80.9 & 78.4  & 77.5  &78.8&80.1 \\
\toprule
\end{tabular}
\end{table}

\subsection{Poisoning Ratio}
Availability attacks are sensitive to the proportion of poisoned data in the dataset and usually need to poison the whole dataset \cite{huang2020unlearnable, fowl2021adversarial}.
In the main body, we report results when the poisoning ratio is $100\%$.
Here, we investigate the influence of the poisoning ratio on the attack performance of AUE and AAP.
Table \ref{tab:poisoning-ratio} illustrates that our augmented methods inherit the vulnerability to poisoning ratio from basic approaches, i.e. UE and AP, though AUE is more robust than AAP.
This characteristic also necessitates the prompt processing of newly acquired clean data, imposing higher efficiency demands on the generation of attacks.
\begin{table}[h]
\centering
\caption{Performance(\%) of attacks with different poisoning ratios on CIFAR-10.}
\label{tab:poisoning-ratio}
\begin{tabular}{ccccc}
\toprule
&Ratio & AUE  & T-AAP & UT-AAP \\
\midrule
\multirow{3}{*}{SL}&$95\%$         & 75.6 & 82.1  & 84.2   \\
&$90\%$          & 82.2 & 86.6  & 89.2   \\
&$80\%$          & 87.6 & 89.8  & 91.2   \\
\midrule
\multirow{3}{*}{SimCLR}&$95\%$         & 69.7 & 76.8  & 74.2   \\
&$90\%$         & 74.5 & 82.1  & 79.9   \\
&$80\%$         & 79.7 & 85.5  & 83.9   \\
\bottomrule
\end{tabular}
\end{table}

\subsection{Discussion of Clean Linear Probing}\label{app-subsect: discussion-of-clean-linear-probing}
While our threat model linear probes on poisoned data,
\citet{he2022indiscriminate} use clean data for linear probing instead. 
In Figure \ref{fig:linear-probe}, we compare the final classification performance of SimCLR models in these two settings.
Feature extractors are trained on poisoned data and are fixed.
We focus on the classification performance after linear probing on clean or poisoned data.
While CP and TUE obtain similar attack performance in both cases, clean linear probing can mitigate SL-based attacks including AP, SEP-FA-VR, AAP, and AUE.
On one hand, for SL-based poisoning, the dissimilarities between clean features and poisoned features hinder a classifier head obtained by poisoned linear probing in generalizing to clean features, as discussed in Section \ref{subsec:existing-attacks-against-cl}.
However, clean features still contain useful information and can derive another classifier head to perform classification.
On the other hand, contrastive error-minimizing noises confuse the feature extractor directly such that even clean data fail to activate useful features for classification.
But in general, given a responsible data publisher who protects data using availability attacks before release, an unauthorized data collector has no access to unprocessed data for clean linear probing. 
Thus, it is sufficient to achieve contrastive unlearnability with poisoned linear probing in real scenarios.

\begin{figure}[t]
    \centering
    \includegraphics[width=0.35\columnwidth]{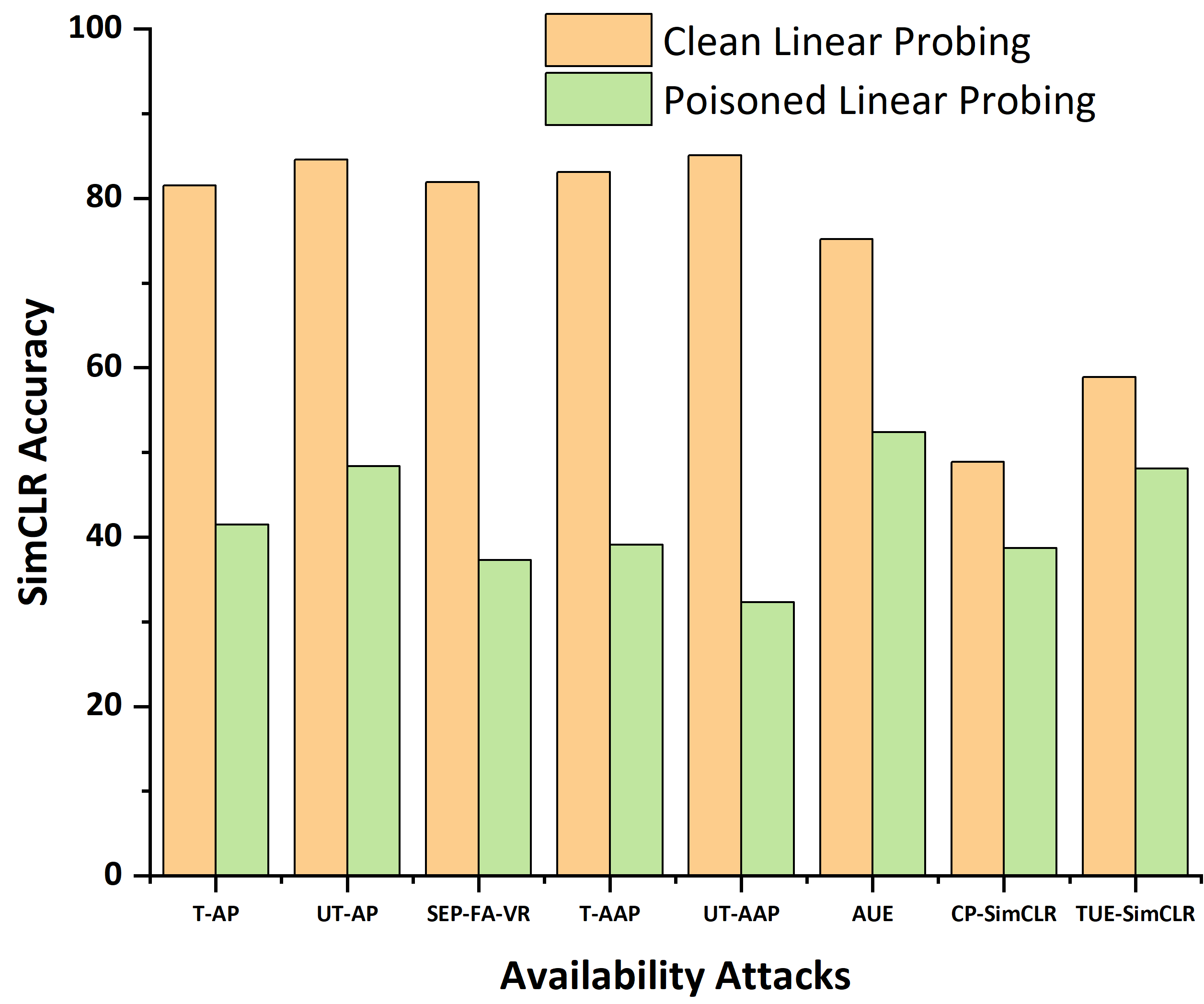}
    \caption{Clean and poisoned linear probing on CIFAR-10.}
    \label{fig:linear-probe}
\end{figure}

\section{Toy Example}
\label{app-sec:proofs}
We study a model $f=h\circ g:\R^d\to \R^n$ with a normalized feature extractor $g:\R^d \to \R^n$ such that $||g(\boldsymbol{x})||\equiv 1$ and a full rank linear classifier $h:\R^n\to \R^n$ in the sense that $h(\boldsymbol{z})=W \boldsymbol{z} + \boldsymbol{b}$ with a full rank square matrix $W\in \R^{n\times n}$.
By singular values decomposition (SVD), $W=U\Sigma V$ with orthogonal matrices $U, V\in \R^{n\times n}$ and $\Sigma = \text{diag}(\sigma_1,\cdots,\sigma_n), \sigma_1 \geq \cdots \geq \sigma_n >0$. 
Let $\Dd$ be a balanced data distribution, i.e. each class would be sampled with the same probability, $\Dd_x$ be the margin distribution, and $\mu$ be an augmentation distribution. 
Assume the supervised loss $\L_{\SL}$ is the mean squared error, and the contrastive loss $\L_{\CL}$ contains only one negative example:
\begin{align*}
\L_{\SL}(\boldsymbol{x},y,\pi) &= \frac{1}{n}|| h\circ g (\pi(\boldsymbol{x})) - \boldsymbol{e}_y||^2\\
\L_{\CL}(\boldsymbol{x}, \boldsymbol{x}^-, \pi, \tau, \rho)
&=  \log(1+ \frac{e^{g(\pi(\boldsymbol{x}))^\top g(\rho(\boldsymbol{x}))}}{e^{g(\pi(\boldsymbol{x}))^\top g(\tau(\boldsymbol{x}^-))}}).
\end{align*}

\begin{proposition} \label{thm:main}
Let $\Ee_{\SL}
     =\E_{\Dd, \mu} 
     \big[\L_{\SL}(\boldsymbol{x},y,\pi)\big]$.
With probability at least $1-4\sqrt{\Ee_{\SL}}$, it holds 
\begin{small}
 \begin{align*}
    &\L_{\CL}(\boldsymbol{x}, \boldsymbol{x}^-, \pi, \tau, \rho)<\frac{1}{n} \log (1+\frac{\sigma_n}{\sigma_n-2n\sqrt{\Ee_{\SL}}})\\
    &+ \frac{n-1}{n} \log (1+\frac{\sigma_1^2\sigma_n-\sigma_n(1-\sqrt{2n\sqrt{\Ee_{\SL}}})^2}{\sigma_1^2\sigma_n-2n\sigma_1^2\sqrt{\Ee_{\SL}}}).
    \end{align*}
\end{small}
\end{proposition}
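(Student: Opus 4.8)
The plan is to convert the averaged guarantee $\Ee_{\SL}$ into a per-sample one by Markov's inequality, then read ``small MSE'' as ``feature pinned near its class direction'' by inverting the head $h$, and finally feed the resulting inner-product estimates into the increasing map $z\mapsto\log(1+e^{z})$. First I would apply Markov to the nonnegative random variable $\L_{\SL}$ with threshold $\sqrt{\Ee_{\SL}}$, giving $\Pr[\L_{\SL}\ge\sqrt{\Ee_{\SL}}]\le\sqrt{\Ee_{\SL}}$ for one augmented input. Since $\L_{\CL}$ sees only the three views $\pi(\boldsymbol{x})$, $\rho(\boldsymbol{x})$ and $\tau(\boldsymbol{x}^-)$, a union bound (with the constant relaxed to $4$) shows that with probability at least $1-4\sqrt{\Ee_{\SL}}$ every one of them has its logit within $\eta:=\sqrt{n\sqrt{\Ee_{\SL}}}$ of its one-hot target, because $\L_{\SL}=\tfrac1n\|h(g(\cdot))-\boldsymbol{e}_y\|^2$.

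The geometric core is the second step. Writing $h(\boldsymbol{z})=W\boldsymbol{z}+\boldsymbol{b}$, the bias cancels in every logit difference, $h(g(\boldsymbol{u}))-h(g(\boldsymbol{v}))=W\big(g(\boldsymbol{u})-g(\boldsymbol{v})\big)$, so I never need $\boldsymbol{b}$ nor the ``ideal'' features. For the positive pair both logits sit within $\eta$ of the same target $\boldsymbol{e}_y$, hence $\|W(g(\pi\boldsymbol{x})-g(\rho\boldsymbol{x}))\|<2\eta$; for a different-class negative the two logits are near $\boldsymbol{e}_y$ and $\boldsymbol{e}_{y^-}$, separated by $\|\boldsymbol{e}_y-\boldsymbol{e}_{y^-}\|=\sqrt2$, so $\|W(g(\pi\boldsymbol{x})-g(\tau\boldsymbol{x}^-))\|$ lies in $(\sqrt2-2\eta,\ \sqrt2+2\eta)$. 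The singular-value sandwich $\tfrac1{\sigma_1}\|W\boldsymbol{\Delta}\|\le\|\boldsymbol{\Delta}\|\le\tfrac1{\sigma_n}\|W\boldsymbol{\Delta}\|$ turns these into feature-distance bounds, and the unit-norm identity $\boldsymbol{u}^\top\boldsymbol{v}=1-\tfrac12\|\boldsymbol{u}-\boldsymbol{v}\|^2$ turns distances into the inner products inside $\L_{\CL}$. Both singular values surface here: $\sigma_n$ caps how far features can drift (controlling the positive inner product and the same-class term), while $\sigma_1$ enters only for a distinct negative, to keep $\|g(\pi\boldsymbol{x})-g(\tau\boldsymbol{x}^-)\|$ bounded away from the target gap $\sqrt2$; the identity $\tfrac12(\sqrt2-2\eta)^2=(1-\sqrt2\,\eta)^2$ with $\sqrt2\,\eta=\sqrt{2n\sqrt{\Ee_{\SL}}}$ already reproduces the numerator of the second term.

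The third step assembles the pieces. As $\Dd$ is balanced over $n$ classes, the negative $\boldsymbol{x}^-$ matches the anchor's label with probability $1/n$ and differs with probability $(n-1)/n$; conditioning on these events and substituting the extreme admissible inner products into $\log(1+e^{z})$ gives two conditional bounds, which I then combine with weights $1/n$ and $(n-1)/n$. Exponentials are rewritten as rational expressions through $e^{z}\le 1/(1-z)$, producing the factor $\sigma_n/(\sigma_n-2n\sqrt{\Ee_{\SL}})$ for the same-class case; pleasantly, the different-class argument factors as this same quantity times the negative-pair factor $1-(1-\sqrt2\,\eta)^{2}/\sigma_1^{2}$, which is exactly the stated second fraction. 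Summing yields the claimed inequality.

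The main obstacle I anticipate is precisely the different-class term. Unlike the same-class case, where a one-sided distance bound through $\sigma_n$ suffices, here I must control $\|g(\pi\boldsymbol{x})-g(\tau\boldsymbol{x}^-)\|$ from both sides --- using $\sigma_1$ to bound it below, away from the target separation $\sqrt2$, and $\sigma_n$ to cap its growth --- while keeping both features exactly on the unit sphere and cleanly separating the linear-in-$\eta$ from the quadratic-in-$\eta$ error terms (the reason $\sqrt{2n\sqrt{\Ee_{\SL}}}$ and $2n\sqrt{\Ee_{\SL}}$ appear together). A secondary check is that the rational rewriting stays valid, i.e. the denominators such as $\sigma_n-2n\sqrt{\Ee_{\SL}}$ remain positive; this is the quantitative meaning of the hypothesis that $f$ is well trained, and it is where the ``with high probability'' and ``small $\Ee_{\SL}$'' assumptions are actually used.
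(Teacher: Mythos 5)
Your proposal retraces the paper's own proof almost lemma for lemma: Markov's inequality plus a union bound over the augmented views (the paper's Lemma \ref{lem:markov-inequ}), cancellation of the bias so that logit differences equal $W$ applied to feature differences, the singular-value sandwich $\sigma_n\|\boldsymbol{z}\|\le\|W\boldsymbol{z}\|\le\sigma_1\|\boldsymbol{z}\|$ (Lemma \ref{lem:sigular-value-bound}), the unit-sphere identity $\boldsymbol{u}^\top\boldsymbol{v}=1-\tfrac12\|\boldsymbol{u}-\boldsymbol{v}\|^2$, and the $\tfrac1n$ versus $\tfrac{n-1}{n}$ class-balance split (Lemmas \ref{lem:same-x} and \ref{lem:diff-x}). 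The genuine gap is in your final conversion step. The loss is $\log(1+e^{A-B})$ where $A$ is the negative-pair and $B$ the positive-pair inner product, and the lemmas give $A\le a$ and $B\ge b$ with $a:=1-(1-\sqrt{2n\sqrt{\Ee_{\SL}}})^2/\sigma_1^2$ (cross-class case) and $b:=1-2n\sqrt{\Ee_{\SL}}/\sigma_n$. Your inequality $e^z\le 1/(1-z)$ does handle the same-class term: there $A\le 1$, so $e^{A-B}\le e^{1-b}\le 1/b=\sigma_n/(\sigma_n-2n\sqrt{\Ee_{\SL}})$, exactly the first term of the statement. But the ``pleasant factorization'' you assert for the cross-class term is false: you would need $e^{a-b}\le a/b$, i.e.\ $e^{a-1}\le a$ after peeling off the factor $e^{1-b}\le 1/b$, whereas the true inequality is the reverse, $e^{a-1}\ge a$ for every $a>0$ (equivalently $\log a\le a-1$), strict whenever $a<1$. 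No patch can close this from the lemmas alone: the sharpest consequence of $A\le a$, $B\ge b$ is $e^{A-B}\le e^{a-b}$, and since $t\mapsto t-\log t$ is decreasing on $(0,1]$ and $a<b\le 1$, one has $e^{a-b}>a/b$ strictly. Concretely, take $\sigma_1=\sigma_n=1$ and $\Ee_{\SL}\to 0$: the stated cross-class factor tends to $0$, yet a perfectly trained model has cross-class loss $\log(1+e^{-1})>0$; your method genuinely yields only the weaker bound $\log\bigl(1+e^{a-1}\,\sigma_n/(\sigma_n-2n\sqrt{\Ee_{\SL}})\bigr)$.

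For comparison, the paper's own proof ``handles'' this step by silently substituting $a$ for $e^{A}$ and $b$ for $e^{B}$ inside the ratio, with no justification; your instinct that this conversion is the crux (you flagged it as the main obstacle) is exactly right, but your patch rescues only the same-class term, and the example above shows the cross-class term as stated cannot be rescued at all. Two smaller discrepancies you inherit from the paper: the positive-pair computation actually gives $B\ge 1-2n\sqrt{\Ee_{\SL}}/\sigma_n^2$ (the sandwich enters squared), so the single power of $\sigma_n$ in the stated denominators requires $\sigma_n\ge 1$; and averaging the two conditional bounds with weights $\tfrac1n$ and $\tfrac{n-1}{n}$ does not produce a pointwise high-probability bound, since on the same-class event (probability $\tfrac1n$) the loss may exceed the weighted average --- the mixture is only valid for an expectation over the class draw.
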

\begin{remark}
1) Assumptions of a square matrix and positive singular values are necessary. Otherwise, the dimensional reduction of feature space impairs the relation between supervised and contrastive losses. 
2) Since supervised losses contain limited information about negative pairs, this inequality is naturally loose. 
However, in the case that supervised learning fits very well, it at least implies that positive features $g(\tau(\boldsymbol{x}))$ are closer to $g(\pi(\boldsymbol{x}))$ than negative features $g(\rho(\boldsymbol{x^-}))$.
\end{remark}

\subsection{Lemmas}

\begin{lemma} \label{lem:sigular-value-bound}
For any $\boldsymbol{z}\in \R^n$,
\begin{align*}
    \sigma_n ||\boldsymbol{z}|| \leq ||W \boldsymbol{z}|| \leq \sigma_1 ||\boldsymbol{z}||.
\end{align*}
\end{lemma}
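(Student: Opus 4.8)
The plan is to peel off the two orthogonal factors in $W=U\Sigma V$ and reduce the inequality to the elementary action of the diagonal matrix $\Sigma$, exploiting that orthogonal maps preserve the Euclidean norm.

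First I would establish the norm-preservation fact: for any orthogonal matrix $Q$ and any vector $\boldsymbol{w}$, we have $\|Q\boldsymbol{w}\|^2=\boldsymbol{w}^\top Q^\top Q\boldsymbol{w}=\boldsymbol{w}^\top\boldsymbol{w}=\|\boldsymbol{w}\|^2$, since $Q^\top Q=I$. Applying this with $Q=U$ gives $\|W\boldsymbol{z}\|=\|U(\Sigma V\boldsymbol{z})\|=\|\Sigma V\boldsymbol{z}\|$, so the outer factor $U$ drops out. Setting $\boldsymbol{y}=V\boldsymbol{z}$ and applying the same fact with $Q=V$ gives $\|\boldsymbol{y}\|=\|\boldsymbol{z}\|$, which reduces the problem to bounding $\|\Sigma\boldsymbol{y}\|$ in terms of $\|\boldsymbol{y}\|$.

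Next I would expand in coordinates: $\|\Sigma\boldsymbol{y}\|^2=\sum_{i=1}^n\sigma_i^2 y_i^2$. Because the singular values are ordered as $\sigma_1\ge\cdots\ge\sigma_n>0$, we have $\sigma_n^2\le\sigma_i^2\le\sigma_1^2$ for every $i$, so the sum is sandwiched as $\sigma_n^2\|\boldsymbol{y}\|^2\le\|\Sigma\boldsymbol{y}\|^2\le\sigma_1^2\|\boldsymbol{y}\|^2$. Taking (nonnegative) square roots and substituting back $\|\boldsymbol{y}\|=\|\boldsymbol{z}\|$ together with $\|\Sigma\boldsymbol{y}\|=\|\Sigma V\boldsymbol{z}\|=\|W\boldsymbol{z}\|$ yields $\sigma_n\|\boldsymbol{z}\|\le\|W\boldsymbol{z}\|\le\sigma_1\|\boldsymbol{z}\|$, as claimed.

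There is no genuine obstacle here; this is the standard coordinate characterization of the extreme singular values, and the whole argument is a three-line computation. The only thing to watch is the stated SVD convention $W=U\Sigma V$, in which $V$ itself (not $V^\top$) is taken orthogonal; since the argument uses nothing about $V$ beyond norm preservation, the convention is immaterial, and the positivity $\sigma_n>0$ (guaranteed by the full-rank assumption) is what makes the lower bound a nondegenerate statement.
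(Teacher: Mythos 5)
Your proof is correct and follows essentially the same route as the paper's: both write $W=U\Sigma V$, use norm preservation of the orthogonal factors to reduce to $\|\Sigma(V\boldsymbol{z})\|$, and sandwich the coordinate sum $\sum_{i}\sigma_i^2\tilde z_i^2$ between $\sigma_n^2\|\boldsymbol{z}\|^2$ and $\sigma_1^2\|\boldsymbol{z}\|^2$. Your version is slightly more explicit (proving norm preservation and noting the SVD convention is immaterial), but there is no substantive difference.
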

\begin{proof}
Denote $\Tilde{\boldsymbol{z}} = (\Tilde{z}_1, \cdots,\Tilde{z}_n)^\top = V\boldsymbol{z}$. 
Since orthogonal matrices preserve the norm,
\begin{align*}
    ||W \boldsymbol{z}|| = ||U\Sigma V \boldsymbol{z}||
    = ||\Sigma \Tilde{\boldsymbol{z}}||
    = \sqrt{\sum_{i=1}^n \sigma_i^2 \Tilde{z_i}^2 },
\end{align*}
\begin{align*}
    \sigma_n ||\boldsymbol{z}|| = \sigma_n||\Tilde{\boldsymbol{z}}|| 
    \leq \sqrt{\sum_{i=1}^n \sigma_i^2 \Tilde{z_i}^2 } 
    \leq \sigma_1||\Tilde{\boldsymbol{z}}|| = \sigma_1 ||\boldsymbol{z}||.
\end{align*}
\end{proof}

\begin{lemma} \label{lem:markov-inequ}
If $\Ee_{\SL}\leq \epsilon$, then with probability at least $1-\sqrt{\epsilon}$
\begin{align*}
    ||h\circ g(\pi(\boldsymbol{x})) - \boldsymbol{e}_y|| < \sqrt{n \sqrt{\epsilon}},
\end{align*}
where $(\boldsymbol{x},y)\sim \Dd, \pi\sim \mu$.
\end{lemma}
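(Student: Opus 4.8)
The plan is to prove Lemma \ref{lem:markov-inequ} by a direct application of Markov's inequality to the nonnegative random variable $\L_{\SL}(\boldsymbol{x},y,\pi)$. First I would recall that by definition $\L_{\SL}(\boldsymbol{x},y,\pi) = \frac{1}{n}\|h\circ g(\pi(\boldsymbol{x})) - \boldsymbol{e}_y\|^2 \geq 0$, and that its expectation over $(\boldsymbol{x},y)\sim\Dd$, $\pi\sim\mu$ is exactly $\Ee_{\SL}$, which is assumed to satisfy $\Ee_{\SL}\leq\epsilon$. Since the loss is nonnegative, Markov's inequality gives, for any threshold $t>0$,
\begin{align*}
\Pr\big[\L_{\SL}(\boldsymbol{x},y,\pi)\geq t\big] \leq \frac{\E[\L_{\SL}]}{t} = \frac{\Ee_{\SL}}{t} \leq \frac{\epsilon}{t}.
\end{align*}

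The key step is then to choose the threshold $t$ so that both the probability bound and the resulting deviation bound come out in the stated form. Taking $t=\sqrt{\epsilon}$ makes the failure probability at most $\epsilon/\sqrt{\epsilon}=\sqrt{\epsilon}$, so with probability at least $1-\sqrt{\epsilon}$ we have $\L_{\SL}(\boldsymbol{x},y,\pi)<\sqrt{\epsilon}$ (using the complementary event, with a strict inequality). On that event, unfolding the definition of the MSE loss gives $\frac{1}{n}\|h\circ g(\pi(\boldsymbol{x}))-\boldsymbol{e}_y\|^2 < \sqrt{\epsilon}$, hence $\|h\circ g(\pi(\boldsymbol{x}))-\boldsymbol{e}_y\|^2 < n\sqrt{\epsilon}$, and taking square roots yields $\|h\circ g(\pi(\boldsymbol{x}))-\boldsymbol{e}_y\| < \sqrt{n\sqrt{\epsilon}}$, which is exactly the claimed conclusion.

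There is essentially no hard part here; the lemma is a textbook Markov-inequality tail bound dressed up with the specific MSE normalization. The only points requiring a little care are bookkeeping rather than mathematical: matching the factor of $1/n$ from the loss definition correctly through the square-root step, and keeping track of strict versus non-strict inequalities so that the final statement's strict inequality is justified (which is why passing to the complementary event $\{\L_{\SL}<\sqrt{\epsilon}\}$ is cleaner than directly bounding $\{\L_{\SL}\geq\sqrt{\epsilon}\}$). I would note that Lemma \ref{lem:sigular-value-bound} is not needed for this particular lemma; it will instead feed into the main Proposition \ref{thm:main} afterward, where the feature-distance estimate obtained here is combined with the singular-value bounds to control the contrastive loss.
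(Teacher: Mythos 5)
Your proof is correct and is essentially identical to the paper's: both apply Markov's inequality to the nonnegative loss $\L_{\SL}$ with threshold $\sqrt{\epsilon}$, yielding failure probability at most $\Ee_{\SL}/\sqrt{\epsilon}\leq\sqrt{\epsilon}$, and then unfold the MSE definition to obtain the stated bound. Your additional remarks on strict inequalities and the irrelevance of Lemma \ref{lem:sigular-value-bound} here are accurate but do not change the argument.
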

\begin{proof}
    As 
    \begin{align*}
    \Ee_{\SL} =  \E_{ \substack{(\boldsymbol{x},y)\sim \Dd \\ \pi \sim \mu}} \big[\frac{1}{n}|| h\circ g (\pi(\boldsymbol{x})) - \boldsymbol{e}_y||^2\big],
    \end{align*}
    by Markov's inequality, it has
    \begin{align*}
        \text{Pr}(\frac{1}{n}|| h\circ g (\pi(\boldsymbol{x})) - \boldsymbol{e}_y||^2 \geq \sqrt{\epsilon})\leq \sqrt{\epsilon}.
    \end{align*}
\end{proof}

\begin{lemma} \label{lem:same-x}
If $\Ee_{\SL}\leq \epsilon$, then with probability at least $1-2\sqrt{\epsilon}$
\begin{align*}
    g(\pi(\boldsymbol{x}))^\top g(\tau(\boldsymbol{x})) >  1 - \frac{2n\sqrt{\epsilon}}{\sigma_n},
\end{align*}
where $\boldsymbol{x}\sim \Dd_{\boldsymbol{x}}, \pi,\tau \sim \mu$.
\end{lemma}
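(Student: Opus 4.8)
The plan is to exploit the fact that the two augmentations $\pi$ and $\tau$ are applied to the \emph{same} sample $\boldsymbol{x}$, so both $h\circ g(\pi(\boldsymbol{x}))$ and $h\circ g(\tau(\boldsymbol{x}))$ are pulled toward the \emph{same} one-hot target $\boldsymbol{e}_y$ whenever the supervised model fits well. First I would invoke Lemma~\ref{lem:markov-inequ} twice, once for the pair $(\boldsymbol{x},\pi)$ and once for $(\boldsymbol{x},\tau)$: each marginally has the distribution required by that lemma, so each of the events $\|h\circ g(\pi(\boldsymbol{x}))-\boldsymbol{e}_y\|<\sqrt{n\sqrt{\epsilon}}$ and $\|h\circ g(\tau(\boldsymbol{x}))-\boldsymbol{e}_y\|<\sqrt{n\sqrt{\epsilon}}$ holds with probability at least $1-\sqrt{\epsilon}$. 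A union bound then guarantees that \emph{both} hold simultaneously with probability at least $1-2\sqrt{\epsilon}$, which is exactly the success probability claimed in the statement.

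On that event I would chain four elementary steps. The triangle inequality gives $\|h\circ g(\pi(\boldsymbol{x}))-h\circ g(\tau(\boldsymbol{x}))\|<2\sqrt{n\sqrt{\epsilon}}$. Since $h(\boldsymbol{z})=W\boldsymbol{z}+\boldsymbol{b}$ is affine, the bias cancels and $h\circ g(\pi(\boldsymbol{x}))-h\circ g(\tau(\boldsymbol{x}))=W\big(g(\pi(\boldsymbol{x}))-g(\tau(\boldsymbol{x}))\big)$. Applying the lower bound $\sigma_n\|\boldsymbol{z}\|\le\|W\boldsymbol{z}\|$ from Lemma~\ref{lem:sigular-value-bound} with $\boldsymbol{z}=g(\pi(\boldsymbol{x}))-g(\tau(\boldsymbol{x}))$ then yields $\|g(\pi(\boldsymbol{x}))-g(\tau(\boldsymbol{x}))\|<2\sqrt{n\sqrt{\epsilon}}/\sigma_n$. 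Finally, because $g$ is normalized, the polarization identity $g(\pi(\boldsymbol{x}))^\top g(\tau(\boldsymbol{x}))=1-\tfrac12\|g(\pi(\boldsymbol{x}))-g(\tau(\boldsymbol{x}))\|^2$ converts the distance bound into the desired inner-product lower bound.

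The delicate point is this last conversion and the exact power of $\sigma_n$ it produces. Squaring the distance bound gives $\tfrac12\|g(\pi(\boldsymbol{x}))-g(\tau(\boldsymbol{x}))\|^2<2n\sqrt{\epsilon}/\sigma_n^2$, so the polarization identity directly delivers $g(\pi(\boldsymbol{x}))^\top g(\tau(\boldsymbol{x}))>1-2n\sqrt{\epsilon}/\sigma_n^2$; the stated form with a single $\sigma_n$ then follows in the regime $\sigma_n\ge1$ (where $1/\sigma_n^2\le1/\sigma_n$ makes the claimed bound the weaker one), and otherwise one records the sharper $\sigma_n^2$ version. I expect the main care to go into (i) justifying that the two applications of Lemma~\ref{lem:markov-inequ} share the same target $\boldsymbol{e}_y$, because $y$ is fixed once $\boldsymbol{x}$ is drawn, and (ii) tracking the singular-value factor cleanly through the affine map, since this $\sigma_n$ dependence is precisely what propagates into the denominator $\sigma_n-2n\sqrt{\Ee_{\SL}}$ appearing in Proposition~\ref{thm:main}.
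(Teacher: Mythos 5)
Your proof is correct and follows essentially the same route as the paper: Lemma~\ref{lem:markov-inequ} applied twice with a union bound, the triangle inequality, the singular-value bound of Lemma~\ref{lem:sigular-value-bound} applied to the affine classifier, and the polarization identity for normalized features. Your care about the exponent of $\sigma_n$ is also warranted: the paper's own proof derives exactly the bound $1 - 2n\sqrt{\epsilon}/\sigma_n^2$ (via the factor $\tfrac{1}{2\sigma_n^2}$) and then asserts the stated form with a single $\sigma_n$ in the denominator, which only follows in the regime $\sigma_n \ge 1$, so your explicit handling of that case is more precise than the paper itself.
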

\begin{proof}
    By Lemma \ref{lem:markov-inequ},
    with probability at least $1-2\sqrt{\epsilon}$,
    \begin{align*}
        ||h\circ g(\pi(\boldsymbol{x})) - \boldsymbol{e}_y|| < \sqrt{n \sqrt{\epsilon}} 
        \text{\ \ \ \ and\ \ \ \ }
        ||h\circ g(\tau(\boldsymbol{x})) - \boldsymbol{e}_y|| < \sqrt{n \sqrt{\epsilon}}.
    \end{align*}
    By the triangle inequality,
    \begin{align*}
        ||h\circ g(\pi(\boldsymbol{x})) - h\circ g(\tau(\boldsymbol{x}))|| < 2 \sqrt{n\sqrt{\epsilon}}
    \end{align*}
    
    Since $g$ is normalized, by Lemma \ref{lem:sigular-value-bound} we have
    \begin{align*}
         g(\pi(\boldsymbol{x}))^\top g(\tau(\boldsymbol{x})) &= 1 - \frac{1}{2} || g(\pi(\boldsymbol{x}))- g(\tau(\boldsymbol{x}))||^2\\
         &\geq 1 - \frac{1}{2\sigma_n^2} ||h\circ g(\pi(\boldsymbol{x})) - h\circ g(\tau(\boldsymbol{x})||^2 \\ 
         &> 1 - \frac{2n\sqrt{\epsilon}}{\sigma_n}.
         \end{align*}    
\end{proof}

\begin{lemma} \label{lem:diff-x}
Assume $\Dd$ is a balanced dataset.
If $\Ee_{\SL}\leq \epsilon$, then with probability at least $1-2\sqrt{\epsilon}$, one of the following two conditions holds
\begin{enumerate}
    \item with probability $\frac{n-1}{n}$, 
    \begin{align*}
        g(\pi(\boldsymbol{x}))^\top g(\tau(\boldsymbol{x}^-)) < 1 - \frac{(1-\sqrt{2n\sqrt{\epsilon}})^2}{\sigma_1^2};
    \end{align*}
    \item with probability $\frac{1}{n}$, 
    \begin{align*}
        g(\pi(\boldsymbol{x}))^\top g(\tau(\boldsymbol{x}^-)) \leq 1.
    \end{align*}
\end{enumerate}
\end{lemma}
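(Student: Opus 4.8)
The plan is to prove this claim about the negative-pair inner product $g(\pi(\boldsymbol{x}))^\top g(\tau(\boldsymbol{x}^-))$ by splitting on whether the negative sample $\boldsymbol{x}^-$ shares the class of $\boldsymbol{x}$ and then, in the different-class branch, transferring the MSE guarantee from output space back to feature space through the classifier matrix $W$. Because $\Dd$ is balanced and $\boldsymbol{x}^-$ is drawn independently, $\boldsymbol{x}^-$ carries the same label as $\boldsymbol{x}$ with probability $\tfrac1n$ and a different label with probability $\tfrac{n-1}{n}$; this is exactly the source of the two weighted alternatives in the statement.

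First I would apply Lemma \ref{lem:markov-inequ} to both inputs, namely to $(\boldsymbol{x},y)$ under $\pi$ and to $(\boldsymbol{x}^-,y')$ under $\tau$. Each application gives $\|h\circ g(\pi(\boldsymbol{x}))-\boldsymbol{e}_y\|<\sqrt{n\sqrt\epsilon}$ and $\|h\circ g(\tau(\boldsymbol{x}^-))-\boldsymbol{e}_{y'}\|<\sqrt{n\sqrt\epsilon}$ with probability at least $1-\sqrt\epsilon$, so a union bound secures both simultaneously with probability at least $1-2\sqrt\epsilon$, which matches the confidence level in the statement; all subsequent steps are carried out on this event.

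Next I would carry out the case split. In the same-class branch ($y=y'$, probability $\tfrac1n$) I would simply invoke the normalization of $g$ together with Cauchy--Schwarz to obtain the trivial bound $g(\pi(\boldsymbol{x}))^\top g(\tau(\boldsymbol{x}^-))\le 1$, with no further work. In the different-class branch ($y\neq y'$, probability $\tfrac{n-1}{n}$) the one-hot targets are orthogonal, so $\|\boldsymbol{e}_y-\boldsymbol{e}_{y'}\|=\sqrt2$ and the triangle inequality gives $\|h\circ g(\pi(\boldsymbol{x}))-h\circ g(\tau(\boldsymbol{x}^-))\|>\sqrt2-2\sqrt{n\sqrt\epsilon}=\sqrt2\,(1-\sqrt{2n\sqrt\epsilon})$. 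Since the bias cancels in the difference, $h\circ g(\pi(\boldsymbol{x}))-h\circ g(\tau(\boldsymbol{x}^-))=W\big(g(\pi(\boldsymbol{x}))-g(\tau(\boldsymbol{x}^-))\big)$, so Lemma \ref{lem:sigular-value-bound} in the form $\|W\boldsymbol{z}\|\le\sigma_1\|\boldsymbol{z}\|$ pushes this lower bound into feature space, yielding $\|g(\pi(\boldsymbol{x}))-g(\tau(\boldsymbol{x}^-))\|>\sqrt2\,(1-\sqrt{2n\sqrt\epsilon})/\sigma_1$. Finally I would use the identity $u^\top v=1-\tfrac12\|u-v\|^2$ for unit vectors to convert this into $g(\pi(\boldsymbol{x}))^\top g(\tau(\boldsymbol{x}^-))<1-(1-\sqrt{2n\sqrt\epsilon})^2/\sigma_1^2$, the claimed inequality. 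I note the contrast with Lemma \ref{lem:same-x}: there one wants an upper bound on a feature distance and so uses the small singular value $\sigma_n$, whereas here one wants a lower bound and so uses the large singular value $\sigma_1$, which is why $\sigma_1^2$ appears in the denominator.

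The main obstacle is bookkeeping rather than a hard estimate. Two points need care. The inequality $\sqrt2-2\sqrt{n\sqrt\epsilon}>0$, equivalently $\sqrt{2n\sqrt\epsilon}<1$, must hold before I may square it; this is the ``supervised learning fits well'' regime, and outside it the feature-distance lower bound is vacuous. And the two probabilities must be kept in separate layers: the $1-2\sqrt\epsilon$ confidence governs the augmentation and data randomness through Markov's inequality, while the weights $\tfrac1n$ and $\tfrac{n-1}{n}$ govern the independent draw of the negative sample's class. Conflating these two sources of randomness would misstate the result.
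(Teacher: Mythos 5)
Your proof follows essentially the same route as the paper's: the same class-based case split with weights $\tfrac{1}{n}$ and $\tfrac{n-1}{n}$, the same union bound over two applications of Lemma \ref{lem:markov-inequ}, the same triangle-inequality argument pushed through $W$ via Lemma \ref{lem:sigular-value-bound} with $\sigma_1$, and the same unit-vector identity $u^\top v = 1-\tfrac12\|u-v\|^2$ to conclude. Your two added points of care---making the bias cancellation $h\circ g(\pi(\boldsymbol{x}))-h\circ g(\tau(\boldsymbol{x}^-))=W\bigl(g(\pi(\boldsymbol{x}))-g(\tau(\boldsymbol{x}^-))\bigr)$ explicit, and noting that $\sqrt{2n\sqrt{\epsilon}}<1$ is required before squaring the distance lower bound---are refinements of steps the paper leaves implicit, not deviations from its argument.
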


\begin{proof}
\begin{enumerate}
    \item  With probability $\frac{n-1}{n}$, for $(\boldsymbol{x},y), (\boldsymbol{x}^-, y^-)\sim \Dd$, $y\neq y^-$. 
    By Lemma \ref{lem:markov-inequ},
    with probability at least $1-2\sqrt{\epsilon}$,
    \begin{align*}
        ||h\circ g(\pi(\boldsymbol{x})) - \boldsymbol{e}_y|| < \sqrt{n \sqrt{\epsilon}} 
        \text{\ \ \ \ and\ \ \ \ }
        ||h\circ g(\tau(\boldsymbol{x}^-)) - \boldsymbol{e}_{y^-}|| < \sqrt{n \sqrt{\epsilon}}.
    \end{align*}
    By the triangle inequality,
    \begin{align*}
        ||g(\pi(\boldsymbol{x})) - g(\tau(\boldsymbol{x}^-))|| 
        &\geq \frac{1}{\sigma_1} ||h\circ g(\pi(\boldsymbol{x})) - h\circ g(\tau(\boldsymbol{x}^-))||\\
        &\geq \frac{1}{\sigma_1} (||\boldsymbol{e}_y - \boldsymbol{e}_{y^-}|| - ||h\circ g(\pi(\boldsymbol{x})) - \boldsymbol{e}_y|| - ||h\circ g(\tau(\boldsymbol{x}^-)) - \boldsymbol{e}_{y^-}||)\\
        &> \frac{\sqrt{2} - 2\sqrt{n\sqrt{\epsilon}} } {\sigma_1}.
    \end{align*}
    Since $g$ is normalized,
    \begin{align*}
        g(\pi(\boldsymbol{x}))^\top g(\tau(\boldsymbol{x}^-)) &= 1 - \frac{1}{2}||g(\pi(\boldsymbol{x}))- g(\tau(\boldsymbol{x}^-))||^2 \\ 
        &< 1 - \frac{(1-\sqrt{2n\sqrt{\epsilon}})^2}{\sigma_1^2}.
    \end{align*}

    \item As we assume $\Dd$ is a balanced dataset, with probability $\frac{1}{n}$, for $(\boldsymbol{x},y), (\boldsymbol{x}^-, y^-)\sim \Dd$, $y= y^-$. 
    Since $g$ is normalized,
    \begin{align*}
        g(\pi(\boldsymbol{x}))^\top g(\tau(\boldsymbol{x}^-)) &= 1 - \frac{1}{2}||g(\pi(\boldsymbol{x}))- g(\tau(\boldsymbol{x}^-))||^2 \\ 
        &\leq  1 - \frac{1}{2\sigma_1^2}||h\circ g(\pi(\boldsymbol{x}))- h\circ g(\tau(\boldsymbol{x}^-))||^2 \\
        &\leq 1.
    \end{align*}

\end{enumerate}
\end{proof}

\subsection{Proof of Proposition \ref{thm:main}}
\begin{proof}
    Let $\Ee_{\SL} = \epsilon$.
    Combining Lemma \ref{lem:same-x} and Lemma \ref{lem:diff-x}, for a sample $\boldsymbol{x}$ and its negative sample $\boldsymbol{x}^-$ i.i.d from  $\Dd_{\boldsymbol{x}}$, and data augmentation method $\pi$, $\tau$, $\rho$ i.i.d from  $\mu$, with probability at least $1-4\sqrt{\Ee_{\SL}}$, it holds that
    \begin{align*}
    \L_{\CL}(x, x^-, \pi, \tau, \rho)
    =&-\log \frac{e^{g(\pi(\boldsymbol{x}))^\top g(\tau(\boldsymbol{x}))}}{e^{g(\pi(\boldsymbol{x}))^\top g(\tau(\boldsymbol{x}))}+e^{g(\pi(\boldsymbol{x}))^\top g(\rho(\boldsymbol{x}^-))}}\\
    =& \log (1 + \frac{e^{g(\pi(\boldsymbol{x}))^\top g(\rho(\boldsymbol{x}^-))}}{e^{g(\pi(\boldsymbol{x}))^\top g(\tau(\boldsymbol{x}))}})\\
    <& \frac{n-1}{n} \log (1+\frac{1 - \frac{(1-\sqrt{2n\sqrt{\Ee_{\SL}}})^2}{\sigma_1^2}}{1 - \frac{2n\sqrt{\Ee_{\SL}}}{\sigma_n}}) 
    + \frac{1}{n} \log (1+\frac{1}{1 - \frac{2n\sqrt{\Ee_{\SL}}}{\sigma_n}})\\
    =&\frac{1}{n} \log (1+\frac{\sigma_n}{\sigma_n-2n\sqrt{\Ee_{\SL}}})+ \frac{n-1}{n} \log (1+\frac{\sigma_1^2\sigma_n-\sigma_n(1-\sqrt{2n\sqrt{\Ee_{\SL}}})^2}{\sigma_1^2\sigma_n-2n\sigma_1^2\sqrt{\Ee_{\SL}}}).
    \end{align*}
\end{proof}

\end{document}